\title{Bandit Convex Optimization:\\ $\sqrt{T}$ Regret in One Dimension}
\author{%
S\'ebastien Bubeck\\
Microsoft Research\\
%Redmond, WA 98052\\
\email{sebubeck@microsoft.com}
\and
Ofer Dekel\\
Microsoft Research\\
%Redmond, WA 98052\\
\email{oferd@microsoft.com}
\and
Tomer Koren\thanks{Parts of this work were done while the author was at Microsoft Research, Redmond.}\\
Technion\\
%Haifa 32000, Israel\\
\email{tomerk@technion.ac.il}
\and
Yuval Peres\\
Microsoft Research\\
%Redmond, WA 98052\\
\email{peres@microsoft.com}
}
\pgfplotsset{compat=newest}
\pgfplotsset{plot coordinates/math parser=false}
\newcommand{\ignore}[1]{}
\newcommand{\email}[1]{\texttt{#1}}
\theoremstyle{plain}
\newtheorem{theorem}{Theorem}
\newtheorem{lemma}[theorem]{Lemma}
\newtheorem*{theorem*}{Theorem}
\newtheorem*{lemma*}{Lemma}
\newtheorem*{corollary*}{Corollary}
\newtheorem*{proposition*}{Proposition}
\newtheorem*{claim*}{Claim}
\newtheorem*{fact*}{Fact}
\theoremstyle{definition}
\newtheorem*{definition*}{Definition}
\newtheorem*{remark*}{Remark}
\newtheorem*{example*}{Example}
\theoremstyle{plain}
\newtheorem*{theoremaux}{\theoremauxref}
\gdef\theoremauxref{1}
\newenvironment{repthm}[2][]{%
  \def\theoremauxref{\cref{#2}}
  \begin{theoremaux}[#1]
}{%
  \end{theoremaux}
}
\DeclareMathAlphabet{\mathbfsf}{\encodingdefault}{\sfdefault}{bx}{n}
\DeclareMathOperator*{\argmin}{arg\,min}
\let\Pr\relax
\DeclareMathOperator{\Pr}{\mathbb{P}}
\newcommand{\lr}[1]{\!\left(#1\right)\!}
\newcommand{\lrbig}[1]{\big(#1\big)}
\newcommand{\lrBig}[1]{\Big(#1\Big)}
\newcommand{\lrbra}[1]{\!\left[#1\right]\!}
\newcommand{\lrset}[1]{\left\{#1\right\}}
\newcommand{\set}[1]{\{#1\}}
\newcommand{\abs}[1]{|#1|}
\newcommand{\norm}[1]{\|#1\|}
\renewcommand{\t}[1]{\smash{\tilde{#1}}}
\newcommand{\wt}[1]{\smash{\widetilde{#1}}}
\renewcommand{\O}{O}
\newcommand{\tO}{\wt{\O}}
\newcommand{\tTheta}{\wt{\Theta}}
\newcommand{\E}{\mathbb{E}}
\newcommand{\EE}[1]{\E\lrbra{#1}}
\newcommand{\var}{\mathrm{Var}}
\newcommand{\ind}[1]{\mathds{1}\!\lrset{#1}}
\newcommand{\st}{\star}
\newcommand{\kl}[2]{\mathsf{D}_\mathrm{KL}(#1,#2)}
\newcommand{\ent}{\mathsf{H}}
\newcommand{\info}{\mathsf{I}}
\newcommand{\reals}{\mathbb{R}}
\newcommand{\eps}{\epsilon}
\newcommand{\del}{\delta}
\newcommand{\Del}{\Delta}
\newcommand{\half}{\frac{1}{2}}
\newcommand{\thalf}{\tfrac{1}{2}}
\newcommand{\eq}{~=~}
\renewcommand{\leq}{~\le~}
\renewcommand{\geq}{~\ge~}
\newcommand{\K}{\mathcal{K}}
\newcommand{\F}{\mathcal{F}}
\newcommand{\X}{\mathcal{X}}
\renewcommand{\H}{\mathcal{H}}
\newcommand{\R}{r}
\newcommand{\V}{v}
\begin{document} 
\maketitle

\begin{abstract}
We analyze the minimax regret of the adversarial bandit convex
optimization problem.  Focusing on the one-dimensional case, we prove
that the minimax regret is $\tTheta(\sqrt{T})$ and partially
resolve a decade-old open problem. Our analysis is
non-constructive, as we do not present a concrete algorithm that
attains this regret rate. Instead, we use minimax duality to reduce
the problem to a Bayesian setting, where the convex loss functions are
drawn from a worst-case distribution, and then we solve the Bayesian
version of the problem with a variant of Thompson Sampling. 
Our analysis features a novel use of convexity, formalized as a ``local-to-global'' property of convex functions, that may be of independent interest.%
%\footnote{This paper is eligible for a best student paper award.}
\end{abstract}

\section{Introduction}

Online convex optimization with bandit feedback, commonly known as
bandit convex optimization, can be described as a $T$-round game,
played by a randomized player in an adversarial environment.  Before
the game begins, the adversarial environment chooses an arbitrary
sequence of $T$ bounded convex functions $f_1,\ldots,f_T$, where each
$f_t:\K \mapsto [0,1]$ and $\K$ is a fixed convex and compact set in
$\reals^n$. On round $t$ of the game, the player chooses a point $X_t
\in \K$ and incurs a loss of $f_t(X_t)$. The player observes the value
of $f_t(X_t)$ and nothing else, and she uses this information to
improve his choices going forward.  The player's performance is
measured in terms of his $T$-round \emph{regret}, defined as
$\sum_{t=1}^T f_t(X_t) - \min_{x \in \K} \sum_{t=1}^T f_t(x)$. In
words, the regret compares the player's cumulative loss to that of the
best fixed point in hindsight.

While regret measures the performance of a specific player against a
specific loss sequence, the inherent difficulty of the game is
measured using the notion of \emph{minimax regret}. Informally, the
game's minimax regret is the regret of an optimal player when she faces
the worst-case loss sequence. Characterizing the minimax regret of
bandit convex optimization is one of the most elusive open problems in
the field of online learning. For general bounded convex loss functions, 
\cite{FKM05} presents an algorithm that guarantees
a regret of $\tO(T^{5/6})$---and this is the best known upper bound on
the minimax regret of the game. Better regret rates can be guaranteed
if additional assumptions are made: for Lipschitz functions the
regret is $\tO(T^{3/4})$ \citep{FKM05}, for Lipschitz and
strongly convex losses the regret is $\tO(T^{2/3})$ \citep{ADX10}, and 
for smooth functions the regret is $\tO(T^{2/3})$
\citep{saha2011improved}. In all of the aforementioned settings, the
best known lower bound on minimax regret is $\Omega(\sqrt{T})$
\citep{DHK08}, and the challenge is to bridge the gap between the
upper and lower bounds. In a few special cases, the gap is resolved
and we know that the minimax regret is exactly $\tTheta(\sqrt{T})$;
specifically, when the loss functions are both smooth and
strongly-convex \citep{HL14}, when they are Lipschitz and linear
\citep{DHK08,AHR08}, or when they are Lipschitz and drawn i.i.d.~from 
a fixed and unknown distribution \citep{AFHKR11}.

In this paper, we resolve the open problem in the one-dimensional
case, where $\K = [0,1]$, by proving that the minimax regret with
arbitrary bounded convex loss functions is $\tTheta(\sqrt{T})$.
Formally, we prove the following theorem.
\begin{theorem}[main result] %\label{thm:main}
There exists a randomized player strategy that
relies on bandit feedback and guarantees an expected regret of 
$\O(\sqrt{T} \log{T} )$ against any sequence of convex loss functions
$f_1,\ldots,f_T : [0,1] \mapsto [0,1]$.
\end{theorem}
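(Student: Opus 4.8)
The plan is to prove the theorem non-constructively, combining minimax duality with an information-theoretic analysis of a Thompson-Sampling-type player on a reduced Bayesian problem.

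\emph{Step 1: reduction to a Bayesian problem.} First I would write the minimax regret as $\inf_\pi \sup_{f_{1:T}} \E[\mathrm{Reg}]$, the infimum over randomized player strategies $\pi$ and supremum over sequences $f_1,\dots,f_T:\K\to[0,1]$ of convex functions, with $\K=[0,1]$. Since the payoff is affine in the player's mixed strategy and in the adversary's distribution over loss sequences, and since the set of convex functions $[0,1]\to[0,1]$ is compact (say, in $L^1$), a minimax theorem (von Neumann's, after a harmless discretization of the observation range to reduce to a finite game, or Sion's directly) yields
\[
\inf_\pi \sup_{f_{1:T}} \E[\mathrm{Reg}] \eq \sup_{\mathcal P}\ \inf_\pi \E_{f_{1:T}\sim\mathcal P}[\mathrm{Reg}].
\]
Hence it suffices to show that for \emph{every} prior $\mathcal P$ over convex loss sequences there is a player, allowed to depend on $\mathcal P$, whose Bayesian regret is $\O(\sqrt T\log T)$. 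The matching adversarial-robust strategy then exists but is never exhibited, which is the source of the non-constructiveness.

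\emph{Step 2: an information-theoretic bound, and the entropy of the optimum.} Fix $\mathcal P$ and let $X_\st$ be the minimizer of $\sum_t f_t$. Following the Russo--Van Roy framework, I would analyze a Thompson-Sampling-type player through the \emph{information ratio}
\[
\Gamma_t \eq \frac{\bigl(\EE{f_t(X_t)-f_t(X_\st)\mid \H_{t-1}}\bigr)^2}{\info\bigl(X_\st\,;\,(X_t,f_t(X_t))\mid \H_{t-1}\bigr)},
\]
where $\H_{t-1}$ is the history; if $\Gamma_t\le\bar\Gamma$ for all $t$, the Bayesian regret is at most $\sqrt{\bar\Gamma\cdot\ent(X_\st)\cdot T}$. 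To control $\ent(X_\st)$ I would snap the candidate optima to a grid of resolution $1/\poly(T)$: since a convex function bounded in $[0,1]$ is $\O(1/\eps)$-Lipschitz on $[\eps,1-\eps]$, this shifts the benchmark by only $\O(\sqrt T)$ in total, and the non-Lipschitz behaviour near the endpoints is absorbed by treating $[0,1]$ at $\O(\log T)$ dyadic scales, giving $\ent(X_\st)=\O(\log^2 T)$. It then remains to make $\bar\Gamma=\O(1)$ (or at worst $\poly\log T$), so that the product is $\O(\log^2 T)$ and the regret is $\O(\sqrt T\log T)$.

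\emph{Step 3: bounding the information ratio --- the crux.} The hard part is bounding $\bar\Gamma$: plain Thompson Sampling need not have a bounded information ratio here, because a convex function has unboundedly many degrees of freedom and a single value $f_t(X_t)$ is nearly uninformative about $X_\st$ when $X_t$ lies far from the ``active'' region of $f_t$ --- which is exactly why this problem resisted a $\sqrt T$ bound for a decade. The new ingredient is a ``local-to-global'' property of convex functions: a large suboptimality gap cannot be purely local, because if $f_t(x)\gg f_t(X_\st)$ then $f_t$ has a steep slope between $X_\st$ and $x$, so the profile of $f_t$ on a suitable neighborhood is pinned down by --- and varies substantially with --- the location of the minimizer. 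I would formalize this as a lemma charging the per-round gap to the variability of $f_t$ over a well-chosen interval, and feed it into the decoupling step of the information-ratio argument, where $\info(X_\st;(X_t,Y_t)\mid\H_{t-1})$ is lower-bounded by the variance, across values of $X_\st$, of the conditional mean of the observation. Designing the Thompson-Sampling variant so that it probes at these informative locations --- not only at the sampled candidate optimum --- and then checking that the resulting ratio is $\O(1)$ is the technical heart of the proof; combined with Step 2 this gives Bayesian regret $\O(\sqrt T\log T)$ for every prior, and the theorem follows from Step 1.
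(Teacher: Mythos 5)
Your roadmap coincides with the paper's: minimax duality (via Sion's theorem) to pass to a Bayesian setting, discretization of $[0,1]$ to a polynomial-size grid to control both the benchmark shift and the entropy of $X^{\st}$, the Russo--Van Roy information-ratio machinery, and a ``local-to-global'' property of convex functions as the new ingredient. (A small correction to your Step 2: no dyadic scales are needed; a uniform grid of $K=T$ points already gives $\ent(X^{\st}) \le \log K = \O(\log T)$, and the benchmark shift is controlled by the effective $\O(1/\eps)$-Lipschitzness of a bounded convex function away from the endpoints.)

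The genuine gap is that your Step 3 --- which you yourself flag as the technical heart --- is not carried out, and what you leave unspecified is exactly where the difficulty lives. Three things are missing. (i) A precise, quantitative local-to-global lemma. The paper's version states: if $f,g$ are convex, $x^{\st}$ minimizes $f$ with $f^{\st}=f(x^{\st})$, and $g(x) \le f^{\st} < f(x)$, then for any probability measure $\nu$ on $[x^{\st},x]$ one has $\norm{f-g}_{\nu}^2/(f(x)-g(x))^2 \ge \nu(x^{\st})\cdot\norm{f-f^{\st}}_{\nu}^2/(f(x)-f^{\st})^2$; proving it takes a two-stage reduction (first to linear $g$, then to $g(x)=f^{\st}$ via a monotonicity argument) plus Cauchy--Schwarz, and is not a routine ``steep slope'' observation. (ii) The concrete modification of Thompson Sampling. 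It is not that the player probes extra ``informative locations''; rather, she plays the minimizer $x^{\st}_t$ of the posterior-mean loss with probability at least $\thalf$ (which is what makes $\nu(x^{\st})\ge\thalf$ in the lemma) and samples from the posterior only on the set of points $x_i$ whose conditional mean loss given $X^{\st}=x_i$ is at most $f_t(x^{\st}_t)$ and whose posterior mass is at least $\eps/K$ (which is what makes the hypothesis $g(x)\le f^{\st}$ hold and controls the discarded mass). Without these modifications the lemma does not apply, and the paper notes that whether vanilla Thompson Sampling suffices is open. (iii) The actual information-ratio bound. It is not a pointwise $\O(1)$ bound: one must regularize $f$ to $f(x)+\eps\abs{x-x^{\st}}$ to avoid degeneracy on flat regions, apply the lemma with weights $w_i$ given by normalized energies of $f$ over $[x^{\st},x_i]$, and then show via a telescoping argument that $\sum_i \alpha_i/w_i = \O(\log(K/\eps))$ --- this logarithm is where the final $\log T$ factor comes from. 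As written, your proposal correctly identifies the strategy but defers the entire proof of its key step.
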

The one-dimensional case has received very little special attention,
and the best published result is the $\tO(T^{5/6})$ bound mentioned
above, which holds in any dimension. However, by discretizing the
domain $[0,1]$ appropriately and applying a standard multi-armed
bandit algorithm, one can prove a tighter bound of $\tO(T^{2/3})$; see
\cref{sec:constructive} for details. It is worth noting that
replacing the convexity assumption with a Lipschitz assumption also
gives an upper bound of $\tO(T^{2/3})$ \citep{Kle04}. However,
obtaining the tight upper bound of $\tO(\sqrt{T})$ requires a more
delicate analysis, which is the main focus of this paper.

Our tight upper bound is non-constructive, in the sense that we do not
describe an algorithm that guarantees a $\tO(\sqrt{T})$ regret for any
loss sequence. Instead, we use minimax duality to reduce the problem of bounding the
adversarial minimax regret to the problem of upper bounding the
analogous \emph{maximin regret} in a \emph{Bayesian} setting.  Unlike
our original setting, where the sequence of convex loss functions is
chosen adversarially, the loss functions in the Bayesian setting are
drawn from a probability distribution, called the \emph{prior}, 
which is known to the player. 
The idea of using minimax duality to study minimax regret is not new 
\citep[see, e.g.,][]{AABR09, GPS14}; however, to the best of our knowledge, we
are the first to apply this technique to prove upper bounds in a
bandit feedback scenario.

After reducing our original problem to the Bayesian setting, we design
a novel algorithm for Bayesian bandit convex optimization (in one
dimension) that guarantees $\tO(\sqrt{T})$ regret for any prior
distribution.  Since our main result is non-constructive to begin
with, we are not at all concerned with the computational efficiency of
this algorithm. We first discretize the domain $[0,1]$ and treat each
discrete point as an arm in a multi-armed bandit problem. We then
apply a variant of the classic Thompson Sampling strategy
\citep{Tho33} that is designed to exploit the fact that the loss
functions are all convex. We adapt the analysis of Thompson Sampling
in \cite{russo2014information} to our algorithm and extend it to
arbitrary joint prior distributions over sequences of loss functions
(not necessarily i.i.d.~sequences).

The significance of the convexity assumption is that it enables us to
obtain regret bounds that scale \emph{logarithmically} with the number
of arms, which turns out to be the key property that leads to the
desired $\tO(\sqrt{T})$ upper bound. Intuitively, convexity ensures
that a change to the loss value of one arm influences the loss values
in many of the adjacent arms. Therefore, even the worst case prior
distribution cannot hide a small loss in one arm without globally
influencing the loss of many other arms.  Technically, this aspect of
our analysis boils down to a basic question about convex functions:
given two convex functions $f:\K \mapsto[0,1]$ and $g:\K \mapsto[0,1]$
such that $f(x) < \min_y g(y)$ at some point $x\in\K$, how small can
$\|f-g\|$ be (where $\|\cdot\|$ is an appropriate norm over the
function space)?  In other words, if two convex functions differ
locally, how similar can they be globally? We give an answer to this question in
the one-dimensional case.

The paper is organized as follows. 
We begin in \cref{sec:bayes} where
we define the setting of Bayesian online optimization, establish basic
techniques for the analysis of Bayesian online algorithms, and
demonstrate how to readily recover some of the known minimax regret
bounds for the full information case by bounding the Bayesian
regret. 
Then, in \cref{sec:loc-to-glob}, we
prove the key structural lemma by which we exploit the convexity of the loss
functions. 
\cref{sec:algo} is the main part of the paper, where we give
our algorithm for Bayesian bandit convex optimization (in one
dimension) and analyze its regret. 
We conclude the paper in \cref{sec:conc} with a few remarks and open problems.

\section{From Adversarial to Bayesian Regret} 
\label{sec:bayes}

In this section, we show how regret bounds for an adversarial online
optimization setting can be obtained via a Bayesian analysis.  Before
explaining this technique in detail, we first formalize two variants of the online optimization problem: the adversarial setting and the Bayesian setting.
%. the settings of adversarial online optimization, and Bayesian online optimization.

We begin with the standard, adversarial online optimization setup.
As described above, in this setting the player plays a $T$-round game, during which he
chooses a sequence of points $X_{1:T}$,%
\footnote{Throughout the paper, we use the notation $a_{s:t}$ as shorthand for
the sequence $a_s,\ldots,a_t$.}
where $X_t \in \K$ for all $t$. The player's randomized policy for
choosing $X_{1:T}$ is defined by a sequence of deterministic functions
$\rho_{1:T}$, where each $\rho_t : [0,1]^{t-1} \mapsto \Del(\K)$ (here
$\Del(\K)$ is the set of probability distributions over $\K$). On
round $t$, the player uses $\rho_t$ and his past observations to define
the probability distribution
$$
 	\pi_t \eq \rho_t \big(f_1(X_1),\ldots,f_{t-1}(X_{t-1})\big)~,
$$ 
and then draws a concrete point $X_t \sim \pi_t$. 
Even though $\rho_t$ is a
deterministic function, the probability distribution $\pi_t$ is itself
a random variable, because it depends on the player's random
observations $f_1(X_1),\ldots,f_{t-1}(X_{t-1})$.

The player's cumulative loss at the end of the game is the random
quantity $\sum_{t=1}^T f_t(X_t)$ and his expected regret against the sequence
 $f_{1:T}$ is
$$
	R(\rho_{1:T} ; f_{1:T}) 
	\eq \E \left[ \sum_{t=1}^{T} f_t(X_t) \right ]  
	~-~ \min_{x \in \K} \sum_{t=1}^{T} f_t(x) ~.
$$
The difficulty of the game is measured by its minimax regret, defined as 
$$
	\min_{\rho_{1:T}} \sup_{f_{1:T}}~ R(\rho_{1:T} ; f_{1:T}) ~.
$$

We now turn to introduce the Bayesian online optimization setting.
In the Bayesian setting, we assume that the sequence of loss functions
$F_{1:T}$, where each $F_{t} : \K \mapsto [0,1]$ is convex, is drawn
from a probability distribution $\F$ called the \emph{prior
  distribution}.  Note that $\F$ is a distribution over the entire
sequence of losses, and not over individual functions in the
sequence. Therefore, it can encode arbitrary dependencies between the
loss functions on different rounds. However, we assume that this
distribution is known to the player, and can be used to
design his policy.  The player's \emph{Bayesian
  regret} is defined as
$$
R(\rho_{1:T}; \F) \eq  
\E \left[ \sum_{t=1}^{T} F_t(X_t) - \sum_{t=1}^{T} F_t(X^{\st})\right ]  ~,
$$
where $X^{\st}$ is the point in $\K$ with the smallest cumulative
loss at the end of the game, namely the random variable
\begin{align} \label{eq:xstar}
	X^{\st} \eq \argmin_{x \in \K} \sum_{t=1}^{T} F_{t}(x) ~.
\end{align}
The difficulty of online optimization in a Bayesian
environment is measured using the maximin Bayesian regret, defined as
$$
\sup_{\F} \min_{\rho_{1:T}}~ R(\rho_{1:T}; \F) ~.
$$
In words, the maximin Bayesian regret is the regret of an optimal
Bayesian strategy over the worst possible prior $\F$.  

It turns out that the two online optimization settings we described above are closely related.
The following theorem, which is a consequence of a generalization of the von Neumann minimax theorem, shows that the minimax adversarial regret and maximin Bayesian regret are equal.

\begin{theorem} \label{thm:minimax}
It holds that
$$
	\min_{\rho_{1:T}} \sup_{f_{1:T}}~ R(\rho_{1:T} ; f_{1:T})
	\eq \sup_{\F} \min_{\rho_{1:T}}~ R(\rho_{1:T}; \F) ~.
$$
\end{theorem}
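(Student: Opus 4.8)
The plan is to invoke a minimax theorem, but some care is required because the strategy spaces are infinite-dimensional and the regret functional, while bilinear in the right variables, is not obviously amenable to the classical von Neumann theorem. The key observation is that both sides of the claimed identity are special cases of a single game: nature picks a distribution $\F$ over sequences $f_{1:T}$ of convex functions $[0,1]\to[0,1]$ (where a pure adversarial choice $f_{1:T}$ is the degenerate case of a point mass), and the player picks a randomized policy $\rho_{1:T}$. In this game the payoff to nature is $R(\rho_{1:T};\F) = \E_{f_{1:T}\sim\F}\,R(\rho_{1:T};f_{1:T})$, where $R(\rho_{1:T};f_{1:T})$ is the adversarial-regret expression already defined. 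The point is that this payoff is \emph{affine} in $\F$ (by definition, since it is an expectation over $\F$) and \emph{affine} in the player's randomization: a mixture of policies $\lambda\rho + (1-\lambda)\rho'$ — i.e.\ flip a $\lambda$-coin at the start and commit to $\rho$ or $\rho'$ — induces the mixture of the corresponding trajectory distributions, and $R$ depends on the policy only through the joint law of $(f_{1:T},X_{1:T})$. Once bilinearity on both sides is established, the identity
\[
	\min_{\rho_{1:T}} \sup_{f_{1:T}}\ R(\rho_{1:T};f_{1:T})
	\eq \sup_{\F}\ \min_{\rho_{1:T}}\ R(\rho_{1:T};\F)
\]
is exactly a minimax swap, once we check that $\sup_{f_{1:T}} R(\rho_{1:T};f_{1:T}) = \sup_{\F} R(\rho_{1:T};\F)$ — which holds because the inner supremum over priors of an affine functional is attained at an extreme point, i.e.\ a point mass, so enlarging the adversary's action set from pure sequences to priors does not change the value.

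\textbf{The steps}, in order: (i) Reformulate both sides as the value of the zero-sum game described above, verifying the bilinearity claims; the only subtlety is that $R(\rho;\F)$ in the theorem statement is written with $X^\st = \argmin_x \sum_t F_t(x)$ rather than $\min_x \sum_t f_t(x)$, but these coincide pointwise in the realization, so $\E_{\F}[R(\rho;f_{1:T})]$ indeed equals $R(\rho;\F)$. (ii) Equip the strategy spaces with topologies making them compact and the payoff suitably semicontinuous. The natural choice: the player's policy $\rho_{1:T}$ is determined, round by round, by measurable maps into $\Delta(\K)$, but since only finitely many things matter and losses lie in $[0,1]^{t-1}$, one can discretize or, more cleanly, view a policy as an element of a product of spaces of transition kernels and put the topology of weak convergence; the adversary's space is $\Delta$ of the space of sequences of convex $[0,1]\to[0,1]$ functions, which is compact in the topology of pointwise (equivalently, in one dimension, uniform-on-compacts) convergence — convex functions bounded in $[0,1]$ form a compact set by Helly/Arzel\`a–Ascoli-type arguments since they are automatically Lipschitz on any interior subinterval and equibounded. (iii) Apply a general minimax theorem for bilinear payoffs on convex sets with one side compact and appropriate continuity — Sion's minimax theorem, or the version of von Neumann's theorem for compact convex subsets of topological vector spaces — to conclude equality of the two iterated optima.

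\textbf{The main obstacle} is entirely topological/measure-theoretic: making precise the space of randomized policies and the space of priors so that (a) both are convex, (b) at least one is compact, and (c) the bilinear payoff $R$ is continuous (or lower/upper semicontinuous in the appropriate variable) in the chosen topologies. In particular, one must be careful that $\E[\sum_t f_t(X_t)]$ behaves well as the policy varies — the loss $f_t$ is only continuous, not bounded away from issues at the boundary, but boundedness in $[0,1]$ plus dominated convergence handles this — and that $\sup_{f_{1:T}} R(\rho_{1:T};f_{1:T})$ is attained, which follows from compactness of the adversary's pure-strategy space together with upper semicontinuity of $f_{1:T}\mapsto R(\rho_{1:T};f_{1:T})$ (the term $-\min_x\sum_t f_t(x)$ is upper semicontinuous as the negative of a lower-semicontinuous infimum of continuous functions). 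Once these regularity conditions are in place, the minimax swap is immediate and the theorem follows; I would relegate the detailed verification of compactness and semicontinuity to a lemma or an appendix, since it is standard but tedious.
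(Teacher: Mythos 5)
Your overall route---convexify both players, apply Sion's minimax theorem, and note that the supremum of an affine functional over priors is attained at point masses---is the same as the paper's. But there is a genuine gap in how you convexify the player's side. You define the convex combination $\lambda\rho + (1-\lambda)\rho'$ as ``flip a $\lambda$-coin at the start and commit,'' which is the right operation for making the payoff affine, but the resulting object is a \emph{mixed} strategy (a distribution over policies), not a behavioral strategy $\rho_{1:T}$ of the form in the theorem statement: the coin's outcome must be remembered across rounds, and no single sequence of kernels $\rho_t : [0,1]^{t-1} \to \Del(\K)$ reproduces the induced trajectory law in general (taking the pointwise convex combination of the kernels does \emph{not} give the mixture of trajectory distributions, so the payoff is not affine under that operation either). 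Consequently Sion's theorem, applied to the convex hull you have actually constructed, yields the minimax identity for \emph{mixtures of policies}, and you still owe the argument that $\min$ over mixed strategies equals $\min$ over behavioral strategies on both sides of the identity. That equivalence is Kuhn's theorem on the payoff equivalence of mixed and behavioral strategies---and since the action and observation spaces here are continua, one needs Aumann's continuum version of it. The paper sidesteps your difficulty by setting up the game directly over $u \in \Del(\mathcal{A})$, where $\mathcal{A}$ is the (product-topology compact) set of sequences of \emph{deterministic} maps from histories to actions; $\Del(\mathcal{A})$ is convex and compact and the payoff is affine in $u$ by construction, so Sion applies cleanly, and only then does it invoke Kuhn--Aumann to translate the conclusion into the behavioral-strategy form of \cref{thm:minimax}.

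A secondary remark: Sion's theorem needs compactness only on the side being minimized, so your effort to compactify the adversary's space of convex-function sequences is not required for the swap (and attainment of the outer supremum is not claimed by the theorem). The compactness that actually matters is that of the player's strategy space, which is easiest to obtain exactly in the paper's formulation, as a product of compact sets of deterministic maps. Your extreme-point argument identifying $\sup_{f_{1:T}}$ with $\sup_{\F}$ is correct and matches what the paper does implicitly.
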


For completeness, we include a proof of this fact in \cref{sec:sion}.
As a result, instead of analyzing the minimax regret directly, we can
analyze the maximin Bayesian regret. That is, our new goal is to
design a prior-dependent player policy that guarantees a small regret
against \emph{any} prior distribution~$\F$.

\subsection{Bayesian Analysis with Full Feedback} 
\label{sec:bayes-full}

As a warm-up, we first consider the Bayesian setting where the player
receives full-feedback. Namely, on round $t$, after the player draws
a point $X_{t} \sim \pi_t$ and incurs a loss of $F_t(X_t)$, we
assume that she observes the entire loss function $F_{t}$ as
feedback. We show how minimax duality can be used to recover the known
$\O(\sqrt{T})$ regret bounds for this setting. For simplicity, we
focus on the concrete setting where $\K = \Del_{n}$ (the
$n$-dimensional simplex), and where the convex loss functions
$F_{1:T}$ are also $1$-Lipschitz with respect to the $L_{1}$-norm
(with probability one).

The evolution of the game is specified by a filtration~$\H_{1:T}$,
where each $\H_{t}$ denotes the history observed by the player up to
and including round~$t$ of the game; formally, $\H_{t}$ is the
sigma-field generated by the random variables $X_{1:t}$ and $F_{1:t}$.
To simplify notations, we use the shorthand $\E_{t}[\cdot] = \E[\,
  \cdot \mid \H_{t-1}]$ to denote expectation conditioned on the
history before round $t$. The analogous shorthands $\Pr_t(\cdot)$ and
$\var_{t}(\cdot)$ are defined similarly.

Recall that the player's policy can rely on the prior $\F$. A natural deterministic policy is to choose, based on the random variable $X^{\st}$ defined in \cref{eq:xstar}, actions  $X_{1:T}$ according to
\begin{align} \label{eq:doob}
	\forall ~ t \in [T] ~, \qquad
	X_{t} \eq \E_{t}[X^{\st} ] 
	~.
\end{align}
In other words, the player uses his knowledge of the prior and his
observations so far to calculate a posterior distribution over loss
functions, and then chooses the expected best-point-in-hindsight.
Notice that the sequence $X_{1:T}$ is a martingale (in fact, a Doob martingale), 
whose elements are vectors in the simplex.

The following lemma shows that the expected instantaneous (Bayesian)
regret of the strategy on each round $t$ can be upper bounded in terms
of the variation of the sequence $X_{1:T}$ on that round.
\begin{lemma} \label{lem:tv}
Assume that with probability one, the loss functions $F_{1:T}$ are
convex and $1$-Lipschitz with respect to some norm $\norm{\cdot}$.
Then the strategy defined in \cref{eq:doob} guarantees
$\E[F_{t}(X_{t}) - F_{t}(X^{\st})] \le \E[\norm{X_t -X_{t+1}}]$ for
all $t$.
\end{lemma}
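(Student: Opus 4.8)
The plan is to exploit the convexity of $F_t$ through Jensen's inequality together with its $1$-Lipschitzness, but applied to the \emph{next} iterate $X_{t+1}$ rather than to the played point $X_t$. The key observation is that $F_t$ is measurable with respect to $\H_t$, and that $X_{t+1} = \E_{t+1}[X^\st] = \E[X^\st \mid \H_t]$ is precisely the conditional expectation of $X^\st$ given that sigma-field. (For $t = T$ one reads $X_{T+1} \eqdef X^\st$, which is $\H_T$-measurable, and the argument below degenerates gracefully.)

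First I would fix $t$ and condition on $\H_t$. Given $\H_t$, the function $F_t$ is a fixed convex function while $X^\st$ remains random (it depends on the as-yet-unrevealed losses $F_{t+1:T}$), so the ordinary Jensen inequality applied pointwise for each realization of $\H_t$ yields
$$F_t(X_{t+1}) \eq F_t\big(\E[X^\st \mid \H_t]\big) \leq \E[F_t(X^\st) \mid \H_t] ~.$$
Taking total expectations and using the tower property gives $\E[F_t(X_{t+1})] \le \E[F_t(X^\st)]$; that is, in expectation the next iterate is at least as good as the best point in hindsight, as far as $F_t$ is concerned.

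The rest is routine. Writing the instantaneous Bayesian regret as
$$\E[F_t(X_t) - F_t(X^\st)] \leq \E[F_t(X_t) - F_t(X_{t+1})] \leq \E\big[\, \abs{F_t(X_t) - F_t(X_{t+1})} \,\big] ~,$$
where the first step uses the inequality just established, and then invoking the $1$-Lipschitzness of $F_t$ with respect to $\norm{\cdot}$ to bound $\abs{F_t(X_t) - F_t(X_{t+1})} \le \norm{X_t - X_{t+1}}$, yields the claim. The only point that deserves care is the conditional Jensen step with a \emph{random} convex function: it is legitimate here exactly because $F_t$ is $\H_t$-measurable, so conditionally on $\H_t$ we may treat $F_t$ as deterministic and apply the deterministic Jensen inequality for each realization. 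This is also where the Doob-martingale structure of $X_{1:T}$ becomes relevant, since $\norm{X_t - X_{t+1}}$ is the one-step variation of that martingale --- which is precisely the quantity one would later sum over $t$ to reduce the total regret to the total variation of a bounded martingale.
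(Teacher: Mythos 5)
Your proof is correct. It takes a genuinely different (though equally short) route from the paper's: the paper first linearizes via the subgradient inequality, $F_t(X_t) - F_t(X^\st) \le \nabla F_t(X_t)\cdot(X_t - X^\st)$, then moves the conditional expectation given $\H_t$ inside the linear functional to replace $X^\st$ by $X_{t+1}$, and finishes with H\"older's inequality using $\norm{\nabla F_t(X_t)}_* \le 1$; you instead move the conditional expectation inside $F_t$ itself via conditional Jensen, obtaining $\E[F_t(X_{t+1})] \le \E[F_t(X^\st)]$, and then apply the Lipschitz property directly to the function-value gap $F_t(X_t) - F_t(X_{t+1})$. Both arguments hinge on the same two facts — that $F_t$ is $\H_t$-measurable and that $X_{t+1} = \E[X^\st \mid \H_t]$ — and your handling of the ``random convex function'' issue in the Jensen step is exactly right. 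What your version buys is that it never needs to select a subgradient or invoke dual norms, so it is marginally cleaner measure-theoretically; what the paper's version buys is that it isolates the linear term $\nabla F_t(X_t)\cdot(X_t - X_{t+1})$, which requires only that the subgradient at the single played point be bounded in the dual norm, and which is the form that plugs most directly into the subsequent bound on the total variation of the martingale $X_{1:T}$.
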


\begin{proof}
By the subgradient inequality, we have $F_{t}(X_{t}) - F_{t}(X^{\st})
\le \nabla F_t(X_t) \cdot (X_t - X^{\st})$ for all $t$. The Lipschitz
assumption implies that $\norm{\nabla F_t(X_t)}_{*} \le 1$, where
$\norm{\cdot}_{*}$ is the norm dual of $\norm{\cdot}$. Using
\cref{eq:doob}, noting that $X_{t}, F_{t} \in \H_{t}$, and taking the conditional expectation, we get
$$
	\E_{t+1}\big[ F_{t}(X_{t}) - F_{t}(X^{\st}) \big]
	\leq \nabla F_t(X_t) \cdot (X_t - \E_{t+1}[X^{\st}])
	\eq \nabla F_t(X_t) \cdot (X_t - X_{t+1}) 
	~~.
$$
Finally, applying Holder's inequality on the right-hand side and taking expectations proves the lemma.
\end{proof}
To bound the total variation of $X_{1:T}$, we use a bound of
\cite{neyman2013maximal} on the total variation of martingales in the
simplex.
\begin{lemma}[\citealp{neyman2013maximal}]
For any martingale $Z_1, \hdots, Z_{T+1}$ in the $n$-dimensional simplex, one has 
$$ 
	\E \left[\sum_{t=1}^T \norm{Z_t - Z_{t+1}}_1 \right] 
	\leq \sqrt{\tfrac{1}{2} T\log{n}} 
	~.
$$
\label{lem:neyman}        
\end{lemma}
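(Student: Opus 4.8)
The plan is to bound the cumulative $\ell_1$-variation of the martingale by the total change of a single convex potential $\Phi$ on the simplex, chosen so that $\Phi$ ranges over an interval of length only $\log n$. The natural choice is the entropy deficit relative to the uniform distribution $u=(1/n,\ldots,1/n)$: set $\Phi(z) = \log n - \ent(z) = \kl{z}{u}$, where $\ent(z) = -\sum_i z_i\log z_i$. On $\Del_n$ the function $\Phi$ is convex, nonnegative, vanishes at $u$, and equals $\log n$ at the vertices, so $0\le\Phi\le\log n$. Since $Z_{1:T+1}$ is a martingale and $\Phi$ is convex, Jensen's inequality shows that the sequence $\Phi(Z_1),\ldots,\Phi(Z_{T+1})$ is a submartingale; hence its conditional increments are nonnegative and their expectations telescope to at most $\log n$.

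First I would make the link to the step sizes precise via the Bregman divergence of $\Phi$, which is exactly the relative entropy: $\Phi(p) - \Phi(q) - \nabla\Phi(q)\cdot(p-q) = \kl{p}{q}$. Substituting $p=Z_{t+1}$ and $q=Z_t$, taking the expectation conditional on $Z_{1:t}$, and using $\E[Z_{t+1}\mid Z_{1:t}]=Z_t$ to kill the linear term, I get $\E[\Phi(Z_{t+1}) - \Phi(Z_t)\mid Z_{1:t}] = \E[\kl{Z_{t+1}}{Z_t}\mid Z_{1:t}]$. Summing over $t$ and taking expectations telescopes the left-hand side, giving
$$ \E\!\left[\sum_{t=1}^{T}\kl{Z_{t+1}}{Z_t}\right] \eq \E[\Phi(Z_{T+1})] - \E[\Phi(Z_1)] \leq \log n ~. $$
Pinsker's inequality then yields $\kl{Z_{t+1}}{Z_t} \ge \thalf\norm{Z_{t+1}-Z_t}_1^2$, so $\E[\sum_t\norm{Z_{t+1}-Z_t}_1^2]\le 2\log n$, and a Cauchy--Schwarz over the $T$ rounds (after the step $(\E X)^2\le\E[X^2]$ applied to $X=\sum_t\norm{Z_{t+1}-Z_t}_1$) gives $\E[\sum_t\norm{Z_{t+1}-Z_t}_1]\le\sqrt{2T\log n}$.

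The \emph{main obstacle} is the universal constant: the route above delivers a bound of the right order but a factor of two larger than the claimed $\sqrt{\thalf T\log n}$. This slack is genuine for this particular argument — Pinsker and the uniform Cauchy--Schwarz are only simultaneously tight for essentially degenerate martingales — whereas the variation-maximizing martingale behaves like a Brownian motion on the simplex, for which the expected $\ell_1$ length of a step is a Gaussian-type quantity strictly below the square root of its second moment, and for which the variation cannot be concentrated on few rounds or few coordinates. Matching the optimal constant therefore requires replacing the crude per-round Cauchy--Schwarz by a sharper analysis — e.g.\ passing to the continuous limit and solving the resulting variational problem, or an induction on $T$ driven by a refined potential — which is the content of Neyman's theorem. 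For the present paper only the $\O(\sqrt{T\log n})$ rate is used downstream, so the weaker constant from the clean argument above would in fact already suffice.
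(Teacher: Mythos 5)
The paper does not actually prove this lemma --- it is imported as a black box from \cite{neyman2013maximal} --- so there is no in-house argument to compare yours against. Your proof is sound, and it is in essence the same entropy-compression scheme that the paper itself uses for the bandit analogue (\cref{lem:russo}, proved in \cref{sec:russo}): telescope an entropy potential, convert the per-step information gain into a squared distance via Pinsker, and finish with Cauchy--Schwarz. The Bregman computation is correct (the only technicality is that $\nabla\Phi$ blows up on the boundary of the simplex; but a coordinate of a simplex-valued martingale that hits $0$ stays at $0$ almost surely, so one may restrict to the support of $Z_t$), and each subsequent step is valid. You therefore have a complete proof of $\E[\sum_{t}\norm{Z_t-Z_{t+1}}_1]\le\sqrt{2T\log n}$.

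As for the constant you are agonizing over: the factor of $2$ you ``lose'' is not a defect of your argument but a normalization slip in the statement as printed. Take $T=1$, $n=2$, $Z_1=(\thalf,\thalf)$, and $Z_2$ uniform on the two vertices of $\Del_2$; this is a martingale with $\E\norm{Z_1-Z_2}_1=1$, whereas $\sqrt{\thalf\log 2}\approx 0.59$. So the displayed inequality cannot hold with the $L_{1}$ norm and the constant $\thalf$; that constant is the one appropriate to the total-variation normalization $\thalf\norm{\cdot}_1$, for which Pinsker reads $\tv{P}{Q}\le\sqrt{\thalf\kl{P}{Q}}$ and your computation delivers exactly $\sqrt{\thalf T\log n}$. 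In the $L_{1}$ normalization, your $\sqrt{2T\log n}$ is the correct bound, and --- as you observe --- only the rate matters downstream (\cref{lem:tv} and the ensuing $\O(\sqrt{T\log n})$ claim), so nothing in the paper is affected. The closing speculation that recovering the stated constant would require a continuous-limit variational analysis can therefore be dropped: there is no constant to recover.
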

\cref{lem:neyman} and \cref{lem:tv} together yield a $O(\sqrt{T
  \log{n}})$ bound on the maximin Bayesian regret of online convex
optimization on the simplex with full-feedback. \cref{thm:minimax}
then implies the same bound over the minimax regret in the
corresponding adversarial setting, recovering the well-known bounds in
this case~\cite[e.g.,][]{kivinen1997exponentiated}.
We remark that essentially the same technique can be used to retrieve known
dimension-free regret bounds in the Euclidean setting, e.g., when $\K$
is an Euclidean ball and the losses are Lipschitz with respect to the
$L_{2}$ norm; in this case, the $L_{2}$ total variation of the
martingale $X_{1:T}$ can be shown to be bounded by $O(\sqrt{T})$ with
no dependence on $n$.%
\footnote{This follows from the fact that a martingale in $\reals^n$ can always be projected to a martingale in $\reals^2$ with the same magnitude of increments; 
namely, given a martingale $Z_1,Z_2,\dots$ in $\reals^n$ one can show that there exists a martingale sequence $\wt{Z}_1,\wt{Z}_2,\dots$ in $\reals^2$ such that $\|Z_t- Z_{t+1}\|_2 = \norm{\wt{Z}_t- \wt{Z}_{t+1}}_2$ for all $t$.}

\subsection{Regret Analysis of Bayesian Bandits} 
\label{sec:bayes-bandit}

The analysis in this section builds on the technique introduced by
\cite{russo2014information}. 
While their analysis is stated for prior distributions that are
i.i.d.~(namely, $\F$ is a product distribution), we show that 
it extends to arbitrary prior distributions with essentially no modifications.

We begin by restricting our attention to finite decision sets $\K$, and denote $K = \abs{\K}$. (When we get to the analysis of Bayesian bandit convex
optimization, $\K$ will be an appropriately chosen grid of points in
$[0,1]$.) 
In the bandit case, the history $\H_{t}$ is the sigma-field
generated by the random variables $X_{1:t}$ and
$F_{1}(X_{1}),\ldots,F_{t}(X_{t})$. Following \cite{russo2014information}, 
we consider the following quantities related to the filtration $\H_{1:T}$:
\begin{align} \label{eq:RtVt}
	\forall ~ x \in \K ~, \qquad
	\begin{aligned}
	\R_{t}(x) &\eq \E_{t} \big[ F_{t}(x) - F_{t}(X^{\st}) \big] ~, \\
	\V_{t}(x) &\eq \var_{t}\big( \E_{t}[F_{t}(x) \mid X^{\st}] \big) ~.
	\end{aligned}
\end{align}
The random quantity $\R_{t}(x)$ is the expected regret incurred by 
playing the point $x$ on round~$t$, conditioned on the history. 
Hence, the cumulative expected regret of the player equals 
$\E[ \sum_{t=1}^{T} \R_{t}(X_{t}) ]$.  The random variable $\V_{t}(x)$ is
a proxy for the information revealed about $X^{\st}$ by choosing the point $x$ on
round $t$. Intuitively, if the value of $F_{t}(x)$ varies significantly as a function of the random variable $X^{\st}$, then observing the value of
$F_{t}(x)$ should reveal much information on the identity of $X^{\st}$. 
(More precisely, $\V_{t}(x)$ is the amount of variance in $F_{t}(x)$ explained by the random variable $X^{\st}$.)

The following lemma can be viewed as an analogue of \cref{lem:neyman} in the bandit setting. 
\begin{lemma} \label{lem:russo}
For any player strategy and any prior distribution $\F$, it holds that 
$$\E\left[ \sum_{t=1}^{T} \sqrt{ \E_{t}[\V_{t}(X_{t})] } \right] \leq \sqrt{\thalf T \log{K}} ~.$$
\end{lemma}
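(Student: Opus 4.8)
The plan is to follow the information-theoretic template of \cite{russo2014information}: relate the ``variance proxy'' $\V_t(x)$ to the conditional mutual information between the observation $F_t(X_t)$ and the optimal point $X^{\st}$, sum these informational quantities over the $T$ rounds using the chain rule, and bound the total by the entropy of $X^{\st}$, which is at most $\log K$ since $X^{\st}$ takes values in the finite set $\K$. The extra square roots on the left-hand side are then absorbed by Cauchy--Schwarz, exactly as in the derivation of \cref{lem:neyman}.

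Concretely, I would first fix a round $t$ and work conditionally on $\H_{t-1}$. The key per-round inequality is that the conditional mutual information $\info_t(X^{\st}; F_t(X_t))$ (mutual information under the measure $\Pr_t$, further conditioned on the realized action $X_t$ if the strategy is randomized) is at least a constant times $\E_t[\V_t(X_t)]$. This is the content of the ``variance-to-information'' step in \cite{russo2014information}: writing $\V_t(x) = \var_t(\E_t[F_t(x)\mid X^{\st}])$ as the variance of the conditional-mean function of $X^{\st}$, and using that $F_t(x)\in[0,1]$ is bounded, one invokes the standard fact that mutual information lower-bounds (a constant multiple of) the ``explained variance'' — essentially Pinsker's inequality applied to the decomposition of $F_t(x)$ across the values of $X^{\st}$. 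The boundedness of the losses in $[0,1]$ is what makes this clean; one should get $\E_t[\V_t(X_t)] \le \tfrac12\, \info_t\big(X^{\st}; F_t(X_t), X_t\big)$ or a comparable constant.

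Next I would sum over $t$. By the chain rule for mutual information and the fact that the observations $(X_1, F_1(X_1), \ldots, X_T, F_T(X_T))$ are exactly what generates the filtration $\H_{1:T}$, the telescoping sum $\sum_{t=1}^T \E\big[\info_t(X^{\st}; X_t, F_t(X_t))\big]$ equals $\info(X^{\st}; \H_T) \le \ent(X^{\st}) \le \log K$. Crucially, this step does \emph{not} require the prior $\F$ to be a product distribution: the chain rule and the entropy bound on $X^{\st}$ hold for an arbitrary joint law of $F_{1:T}$, which is precisely the extension claimed in the text before the lemma. Combining with the per-round bound yields $\E\big[\sum_{t=1}^T \E_t[\V_t(X_t)]\big] \le \tfrac12 \log K$.

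Finally, to pass from this bound on the sum of $\E_t[\V_t(X_t)]$ to a bound on the sum of its square roots, apply the Cauchy--Schwarz inequality:
$$
\E\left[\sum_{t=1}^T \sqrt{\E_t[\V_t(X_t)]}\right]
\leq \sqrt{T}\,\sqrt{\E\left[\sum_{t=1}^T \E_t[\V_t(X_t)]\right]}
\leq \sqrt{T}\cdot\sqrt{\thalf \log K}
\eq \sqrt{\thalf T \log K} ~,
$$
where the first inequality also uses Jensen to move the expectation inside the outer square root. I expect the main obstacle to be the per-round variance-to-information inequality: one must be careful about the conditioning (the strategy's internal randomization in choosing $X_t$, and whether $X_t$ is $\H_{t-1}$-measurable or an extra random draw), and verify that the argument of \cite{russo2014information} — which they phrase for i.i.d.~priors — genuinely only uses the marginal structure at round $t$ together with the boundedness of $F_t$, so that it transfers verbatim to general priors. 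Everything else (chain rule, entropy bound, Cauchy--Schwarz) is routine.
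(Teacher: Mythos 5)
Your proposal is correct and follows essentially the same route as the paper's proof: Pinsker's inequality (using boundedness of the losses in $[0,1]$) to show $\V_t(x) \le \thalf I_t(x)$ where $I_t(x)=\info_t(F_t(x);X^{\st})$, telescoping the expected entropy decrease of $X^{\st}$ to bound $\E[\sum_t \E_t[I_t(X_t)]]$ by $\ent(X^{\st})\le\log K$, and a final Cauchy--Schwarz to convert to the sum of square roots. The only (immaterial) difference is ordering: the paper first proves the square-root bound for the information terms and then substitutes $\V_t\le\thalf I_t$, while you do the substitution first.
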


The proof uses tools from information theory to relate the quantity
$\V_{t}(X_{t})$ to the \emph{decrease in entropy} of the random
variable $X^{\st}$ due to the observation on round $t$; the total
decrease in entropy is necessarily bounded, which gives the bound in
the lemma.  For completeness, we give a proof in \cref{sec:russo}.

\cref{lem:russo} suggests a generic way of obtaining regret bounds for
Bayesian algorithms: first bound the instantaneous regret
$\E_{t}[\R_{t}(X_{t})]$ of the algorithm in terms of
$\sqrt{\E_{t}[\V_{t}(X_{t})]}$ for all $t$, then sum the bounds and
apply \cref{lem:russo}.  \cite{russo2014information} refer to the
ratio $\E_{t}[\R_{t}(X_{t})] \big/\! \sqrt{\E_{t}[\V_{t}(X_{t})]}$ as the
\emph{information ratio}, and show that for Thompson Sampling over a
set of $K$ points (under an i.i.d.~prior $\F$) this ratio is always
bounded by $\sqrt{K}$, with no assumptions on the structure of the
functions $F_{1:T}$. In the sequel, we show that this $\sqrt{K}$
factor can be improved to a polylogarithmic term in $K$ (albeit using
a different algorithm) when $F_{1:T}$ are univariate convex functions.

\section{Leveraging Convexity: The Local-to-Global Lemma} \label{sec:loc-to-glob}

\begin{figure}[t]
\begin{center}
\includegraphics{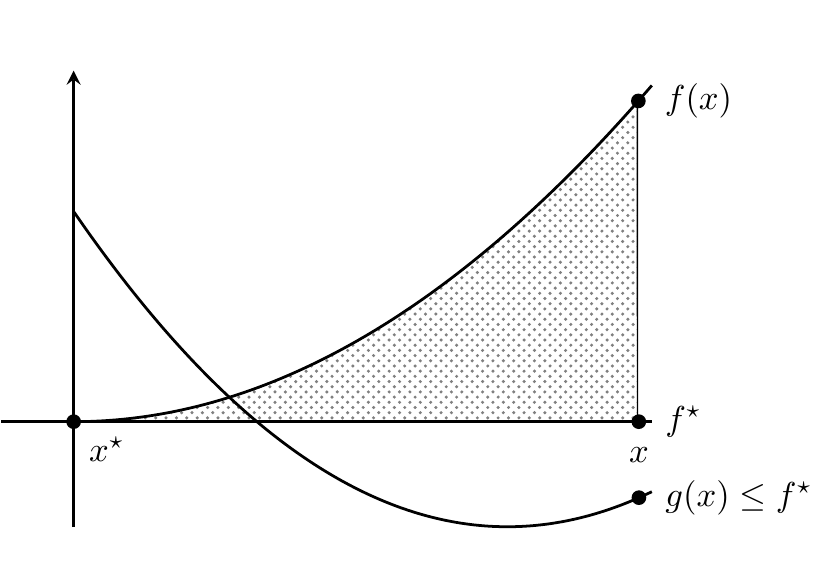}
%\begin{tikzpicture}[trim left=-0.5cm]
%\begin{axis}[
%	xmin=-1, xmax=6, ymin=-1, ymax=3,
%	xscale=1.5,
%	ticks=none,
%	axis x line=middle,
%	axis y line=middle,
%	hide x axis,
%	y axis line style={thick,shorten >=20pt, shorten <=10pt},
%	xlabel=$\smaller f^{\st}$,
%]
%% fill area
%\addplot[pattern=crosshatch dots,pattern color=gray,domain=0:3.9,samples=40] {0.15*x^2} \closedcycle
%;
%% f
%\addplot[smooth,thick,mark=none,domain=0:4,samples=40] {0.15*x^2}
%node[pos=0,circle,fill=black,inner sep=1.5pt] (xst) {} 
%node[pos=0.97,circle,fill=black,inner sep=1.5pt] (fx) {} 
%;
%% g
%\addplot[smooth,thick,mark=none,domain=0:4,samples=40] {0.25*x^2-1.5*x+1.5}
%node[pos=0.98,circle,fill=black,inner sep=1.5pt] (gx) {} 
%;
%% f^{*}
%\addplot[smooth,thick,mark=none,domain=-0.5:4,samples=40] {0}
%node[pos=0.98,circle,fill=black,inner sep=1.5pt] (x) {} 
%;
%% labels
%\node[below right = -2pt of xst] {\smaller $x^{\st}$};
%\node[below = 1pt of x] {\smaller $x$};
%\node[right = 1pt of fx] {\smaller $f(x)$};
%\node[right = 1pt of x] {\smaller $f^{\st}$};
%\node[right = 1pt of gx] {\smaller $g(x) \le f^{\st}$};
%
%\end{axis}
%\end{tikzpicture}
\end{center}
\vspace{-0.5cm}
\caption{\label{fig:l2g}
An illustration of the local-to-global lemma. The $L_{2}$ distance between the reference convex function $f$ to a convex function $g$ in the interval $[x^{\st},x]$, where $x^{\st}$ is the minimizer of $f$ and $x$ is a point such that $g(x) \le f(x^{\st})$, can be lower bounded in terms of the shaded area that depicts the energy of the function $f$ in the same interval.
}
\end{figure}

To obtain the desired regret bound, our analysis must somehow take
advantage of some special property of convex functions. In this
section, we specify which property of convex functions is leveraged in our proof.

To gain some intuition, consider the following prior distribution,
which is not restricted to convex functions: draw a point $X^\star$
uniformly in $[0,1]$ and sets all of the loss functions to be the same
function, $F_t(x) = 1\!\!1_{x \neq X^\star}$ (the indicator of $x \neq
X^\star$). Regardless of the player's policy, she will almost surely
miss the point $X^\star$, observe the loss sequence $1,\ldots,1$, and
incur a regret of $T$. The reason for this high regret is that the
prior was able to hide the good point $X^\star$ in each of the loss
functions without modifying them globally. However, if the loss
functions are required to be convex, it is impossible to design a
similar example. Specifically, any local modification to a convex
function necessarily changes the function globally (namely, at many
different points). This intuitive argument is formalized in the
following lemma; here we denote by 
$\norm{g}_{\nu}^{2} = \int g^{2} d\nu$ 
the $L_{2}$-norm of a function $g : [0,1] \mapsto \reals$ with respect 
to a probability measure $\nu$.
\begin{lemma}[Local-to-global lemma] \label{lem:loc-to-glob}
Let $f, g : [0,1] \mapsto \reals$ be convex functions. 
Denote $x^{\st} = \argmin_{x \in [0,1]} f(x)$ and $f^{\st} = f(x^{\st})$, and let $x \in [0,1]$ such that $g(x) \le f^{\st} < f(x)$. 
Then for any probability measure $\nu$ supported on~$[x^{\st},x]$, we have
$$
	\frac{\norm{f - g}_{\nu}^2}{(f(x) - g(x))^2} 
	\geq \nu(x^{\st}) \cdot 
		\frac{\norm{f - f^{\st}}_{\nu}^2}{(f(x) - f^{\st})^2} ~.
$$
\end{lemma}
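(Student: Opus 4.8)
The plan is to convert the claimed functional inequality into an elementary finite-dimensional optimization, using convexity twice --- once to normalise $g$, and once (through a single chord bound) at the very end. First I would clear away trivialities: if $x<x^{\st}$ the interval $[x^{\st},x]$ is read as $[x,x^{\st}]$ and everything is symmetric, while $x=x^{\st}$ is impossible (it forces $f^{\st}<f^{\st}$). So assume $x^{\st}<x$, put $\phi\eqdef f-f^{\st}\ge 0$, $M\eqdef f(x)-f^{\st}>0$, and $V\eqdef f(x)-g(x)\ge M$ (since $g(x)\le f^{\st}$); after clearing denominators the target is $M^2\norm{f-g}_\nu^2\ge\nu(x^{\st})\,V^2\norm{\phi}_\nu^2$.

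The crucial move is to reduce to a two-atom measure. Fixing $g$ and writing $\nu=p\,\delta_{x^{\st}}+(1-p)\mu_0$ with $p=\nu(x^{\st})$ and $\mu_0$ a probability measure on $(x^{\st},x]$ (the cases $p\in\{0,1\}$ are immediate, as the right-hand side then vanishes), the quantity $M^2\norm{f-g}_\nu^2-p\,V^2\norm{\phi}_\nu^2$ equals $M^2p\,(f(x^{\st})-g(x^{\st}))^2$ plus $(1-p)\int\bigl(M^2(f-g)^2-p\,V^2\phi^2\bigr)d\mu_0$, using $\phi(x^{\st})=0$. This is affine in $\mu_0$, hence minimised over probability measures at a point mass; so it is enough to treat $\nu=p\,\delta_{x^{\st}}+(1-p)\,\delta_{t_0}$ for a single $t_0\in(x^{\st},x]$ (the degenerate $t_0=x^{\st}$ again giving a trivial instance).

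For such a $\nu$ I would then normalise $g$: replacing $g$ by $(1-\lam)f+\lam g$ (still convex for $\lam\in(0,1]$) changes neither $\norm{f-g}_\nu^2/V^2$ nor the right-hand side, and $\lam=M/V$ makes the new function equal $f^{\st}$ at $x$, so we may assume $V=M$. With $a\eqdef\phi(t_0)\ge 0$, $\alpha_0\eqdef(x-t_0)/(x-x^{\st})\in[0,1)$, $W\eqdef g(x^{\st})-f^{\st}$, and $c\eqdef g(t_0)-f^{\st}$, what remains is $pW^2+(1-p)(a-c)^2\ge p(1-p)a^2$, where convexity of $g$ (the chord bound on $[x^{\st},x]$, recalling $g(x)-f^{\st}=0$) supplies the single constraint $c\le\alpha_0 W$. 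Minimising the left side first in $c$ and then in $W$ is a short computation with optimum $\min\{\,pa^2/\alpha_0^2,\ p(1-p)a^2/(p+(1-p)\alpha_0^2)\,\}$, and both of these are $\ge p(1-p)a^2$ because $\alpha_0\le 1$, which closes the argument.

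The hard part is finding the two-atom reduction. Direct pointwise estimates of $f-g$ on $[x^{\st},x]$ turn out to be too lossy to recover the factor $\nu(x^{\st})$: they fail to capture how a convex $g$ that sits above $f^{\st}$ at $x^{\st}$ is forced to stay far from $f$ near $x^{\st}$ in a manner that plays against the atom $\nu(x^{\st})$. Collapsing $\nu$ onto two points makes convexity enter only once and lets one compute the extremal configuration explicitly. A minor subtlety to watch is the order of operations: the normalisation of $g$ relies on convexity of $f$ and must be performed \emph{after}, not before, the reduction to a two-point measure.
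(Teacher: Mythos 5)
Your proof is correct, but it takes a genuinely different route from the paper's. The paper keeps $\nu$ fully general and instead massages $g$ in two stages: it first replaces $g$ by the linear extension of its chord through $x$ and the crossing point $x_0$ (using convexity of $g$), then tilts this line so that $g(x)=f^{\st}$ by showing that the ratio $(f-g)/(f-\tilde g)$ is monotone on $[x^{\st},x]$ --- a derivative computation that requires an approximation by differentiable $f$ --- and finally concludes for arbitrary $\nu$ via the pointwise domination $|f(y)-g(y)|\ge|f(y)-f(x_0)|$, a variance bound, and Cauchy--Schwarz against $\ind{X\ne x^{\st}}$. You instead (i) collapse $\nu$ to a two-atom measure by noting that the deficit $M^2\norm{f-g}_\nu^2-\nu(x^{\st})\,V^2\norm{f-f^{\st}}_\nu^2$ is affine in the part of $\nu$ off the atom at $x^{\st}$ (and the pointwise inequality integrates back to general $\mu_0$), (ii) normalize $g$ by the convex combination $(1-\lam)f+\lam g$ with $\lam=M/V$, which preserves both sides of the inequality and is considerably slicker than the paper's monotone-ratio step, and (iii) finish with an explicit quadratic minimization in $(W,c)$ under the single chord constraint $c\le\alpha_0 W$; I verified the constrained minimum is $p(1-p)a^2/(p+(1-p)\alpha_0^2)$ on the boundary (or $pa^2/\alpha_0^2$ when $c=a$ is feasible), both at least $p(1-p)a^2$ since $\alpha_0\le 1$. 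Your route buys an elementary, fully explicit extremal computation; the paper's buys a one-shot treatment of arbitrary $\nu$ without any atom decomposition. One small correction: your closing remark that the normalization of $g$ ``must be performed after'' the two-point reduction is not actually a constraint --- the rescaling preserves the ratio $\norm{f-g}_\nu^2/(f(x)-g(x))^2$ for every $\nu$, so the two steps commute --- but this has no bearing on the validity of the order you chose.
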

To understand the statement of the lemma, it is convenient to think
of~$f$ as a reference convex function, to which we compare another
convex function~$g$; see \cref{fig:l2g}. 
If $g$ substantially differs from $f$ at one
point $x$ (in the sense that $g(x) \le f^{\st}$), then the lemma asserts that 
$g$ must also differ from $f$ globally (in the sense that $\norm{f - g}_{\nu}^2$ 
is large).

%\medskip
\begin{proof}
Let $X$ be a random variable distributed according to $\nu$. To prove the lemma, we must show that
$$
	\frac{\E (f(X) - g(X))^2}{(f(x) - g(x))^2} 
	\geq \Pr(X = x^{\st}) \cdot \frac{\E (f(X) - f^{\st})^2}{(f(x) - f^{\st})^2} 	
	~.
$$
Without loss of generality we can assume that $x > x^{\st}$. 
Let $x_0$ be the unique point such that $f(x_0) = g(x_0)$, and if such a point does not exist let $x_0=x^{\st}$. 
Note that $x_0 < x$, and observe that $g$ is below (resp.~above) $f$ on $[x_0,x]$ (resp.~$[x^{\st},x_0]$). 

\paragraph{Step 1:} 
We first prove that, without loss of generality, one can assume that $g$ is linear. Indeed consider $\tilde{g}$ to be the linear extension of the chord of $g$ between $x$ and $x_0$. 
Then we claim that:
\begin{equation} \label{eq:toprove}
	\frac{\E (f(X) - g(X))^2}{(f(x) - g(x))^2} 
	\geq \frac{\E (f(X) - \t{g}(X))^2}{(f(x) - \t{g}(x))^2}.
\end{equation}
Indeed the denominator is the same on both side of the inequality, and clearly by convexity $\tilde{g}$ is always closer to $f$ than $g$. Thus in the following we assume that $g$ is linear.

\paragraph{Step 2:} 
We show now that one can assume $g(x) = f^{\st}$. 
Let $\t{g}$ be the linear function such that $\t{g}(x) = f^{\st}$ and $\t{g}(x_0) = f(x_0)$.
Similarly to the previous step, we have to show that \cref{eq:toprove} holds true. 
We will show that 
%$$
%	h(y) \eq \frac{f(y)-g(y)}{f(y)-\t{g}(y)}
%$$ 
$
	h(y) = \big( f(y)-g(y) \big) / \big( f(y)-\t{g}(y) \big)
$ 
is non-increasing on $[x^{\st},x]$, which clearly implies \cref{eq:toprove}. 
A simple approximation argument shows that without of generality one can assume that $f$ is differentiable, in which case $h$ is also differentiable.
Observe that $h'(y)$ has the same sign as 
$
	u(y) = f'(y) (g(y) - \t{g}(y)) - g'(y) (f(y) - \t{g}(y)) + \t{g}'(y) (f(y) - g(y)) ~.
$ 
Moreover, $u'(y) = f''(y) (g(y) - \t{g}(y))$ since $g''=\t{g}''=0$, and thus $u$ is decreasing on $[x_0, x]$ and increasing on $[x^{\st},x_0]$ (recall that by convexity $f''(y) \ge 0$). 
Since $u(x_0) \le 0$ (in fact $u(x_0) = 0$ in the case $x_0 \ne x^{\st}$), this implies that $u$ is nonpositive, and thus $h$ is non-increasing, which concludes this step.

\paragraph{Step 3:} 
It remains to show that when $g$ is linear with $g(x) = f^{\st}$, then
\begin{equation} \label{eq:toprove2}
	\E (f(X) - g(X))^2 
	\geq \Pr(X = x^{\st}) \cdot \E (f(X) - f^{\st})^2 
	~.
\end{equation}
For notational convenience we assume $f^{\st}=0$. By monotonicity of $f$ and $g$ on $[x^{\st},x]$, one has $\forall y \in [x^{\st},x]$, $|f(y)-g(y)| \geq |f(y) - f(x_0)|$. Therefore, it holds that
\begin{equation} \label{eq:toprove3}
	\E (f(X) - g(X))^2 
	\geq \E(f(X) - f(x_0))^2 
	\geq \var(f(X)) 
	\eq \E f^2(X) - (\E f(X))^2 .
\end{equation}
Now using Cauchy-Schwarz one has
$
	\E f(X) 
	= \E f(X) \ind{X \neq x^{\st}} 
	\le \sqrt{\Pr(X \neq x^{\st}) \cdot \E f^2(X)} 
	,
$
which together with \cref{eq:toprove3} yields \cref{eq:toprove2}.
\end{proof}

\section{Algorithm for Bayesian Convex Bandits} 
\label{sec:algo}

In this section we present and analyze our algorithm for one-dimensional bandit convex optimization in the Bayesian setting, over $\K = [0,1]$.
Recall that in Bayesian setting, there is a prior distribution $\F$ over a sequence $F_{1:T}$ of loss functions over $\K$, such that each function $F_{t}$ is convex (but not necessarily Lipschitz) and take values in $[0,1]$ with probability one.

Before presenting the algorithm, we make the following simplification: 
given $\eps > 0$, we discretize the interval $[0,1]$ to a grid $\X_{\eps} = \set{x_{1},\ldots,x_{K}}$ of $K = 1/\eps^{2}$ equally-spaced points and treat~$\X_{\eps}$ as the de facto decision set, restricting all computations as well as the player's decisions to this finite set. We may do so without loss of generality: it can be shown (see \cref{sec:lip}) 
that for any sequence of convex loss functions $F_{1},\ldots,F_{T} : \K \mapsto [0,1]$, 
the $T$-round regret (of any algorithm) with respect to $\X_{\eps}$ is at most $2\eps T$ larger than its regret with respect to $\K$, and we will choose $\eps$ to be small enough so that this difference is negligible.

After fixing a grid $\X_{\eps}$, we introduce the following definitions. 
We define the random variable $X^{\st} = \argmin_{x \in \X_{\eps}} \sum_{t=1}^{T} F_{t}(x)$, and for all $t$ and $i, j \in [K]$ let
\begin{align} \label{eq:ffi}
\begin{alignedat}{2} 
	&\alpha_{i,t} 
&\eq& 
	\!\Pr_{t}(X^{\st} = x_{i}) ~, 
\\
	&f_{t}(x_{i}) 
&\eq& 
	\E_{t}[F_{t}(x_{i})] ~, 
\\
	&f_{j,t}(x_{i}) 
&\eq& 
	\E_{t}[F_{t}(x_{i}) \mid X^{\st}=x_{j}] ~.
\end{alignedat}
\end{align}
In words, $X^{\st}$ is the optimal action in hindsight, and $\alpha_{t} = (\alpha_{1,t},\ldots,\alpha_{K,t})$ is the posterior distribution of $X^{\st}$ on round $t$. 
The function $f_{t} : \X_{\eps} \mapsto [0,1]$ is the expected loss function on round $t$ given the feedbacks observed in previous rounds, and for each $j \in [K]$, the function $f_{j,t} : \X_{\eps} \mapsto [0,1]$ is the expected loss function on round $t$ conditioned on $X^{\st}=x_{j}$ and on the history.

\begin{figure}[t]

\RestyleAlgo{boxed}
\setlength{\algomargin}{1em}

\begin{algorithm}[H]
%\DontPrintSemicolon
\SetAlgoNoEnd
\SetAlgoNoLine
\SetArgSty{textrm}
\SetKwFor{For}{For}{}{}
\SetKwInput{KwParams}{Inputs}

\KwParams{prior distribution $\F$, tolerance parameter $\eps>0$}

\BlankLine

Let $K = 1/\eps^{2}$ and $\X_{\eps} = \set{x_{1},\ldots,x_{K}}$ with $x_{i} = i/K$ for all $i \in [K]$ \;

\For{round $t=1$ to $T$:}{
	For all $i \in [K]$, compute $\alpha_{i,t}$, $f_{t}(x_{i})$ and $f_{i,t}(x_{i})$ defined in \cref{eq:ffi} \;
	
	Find $i^{\st}_{t} = \argmin_{i} f_{t}(x_{i})$ and let $x^{\st}_{t} = x_{i^{\st}_{t}}$ \;
	
	Define the set
	\begin{align} \label{eq:S}
		S_{t} \eq \lrset{ 
			i \in [K] ~:~ \text{$f_{i,t}(x_{i}) \le f_{t}(x^{\st}_{t})$
			~~and~~
			$\alpha_{i,t} \ge \tfrac{\eps}{K}$} 
		}
		\text{ \;}
	\end{align}
	
	Sample $X_{t}$ from the distribution $\pi_{t} = (\pi_{1,t},\ldots,\pi_{K,t})$ over $\X_{\eps}$, given by
	\begin{align} \label{eq:p}
		\forall ~ i \in [K] ~,
		\qquad
		\pi_{i,t} \eq 
		\tfrac{1}{2}\alpha_{i,t} \cdot \ind{i \in S_{t}} 
			+ (1 - \tfrac{1}{2}\alpha_{t}(S_{t})) \cdot \ind{i = i^{\st}_{t}}
		~,
	\end{align}
	where we denote $\alpha_{t}(S) = \sum_{i \in S} \alpha_{i,t}$ \;
	
	Play $X_{t}$ and observe feedback $F_{t}(X_{t})$ \;
}
\end{algorithm}

%\vspace{-0.3cm}
\caption{A modified Thompson Sampling strategy that guarantees $\tO(\sqrt{T})$ expected Bayesian regret for any prior distribution $\F$ over convex functions $F_{1},\ldots,F_{T} : [0,1] \mapsto [0,1]$.} \label{fig:algo}
\end{figure}

Using the above definitions, we can present our algorithm, shown in \cref{fig:algo}.
On each round $t$ the algorithm computes, using the knowledge of the prior $\F$ and the feedback observed in previous rounds, the posterior $\alpha_{t}$ and the values $f_{t}(x_{i})$ and $f_{i,t}(x_{i})$ for all $i \in [K]$.
Also, it computes the minimizer $x^{\st}_{t}$ of the expected loss $f_{t}$ over the set $\X_{\eps}$, which is the point that has the smallest expected loss on the current round.
Instead of directly sampling the decision from the posterior $\alpha_{t}$ (as Thompson Sampling would do), we make the following two simple modifications.
First, we add a forced exploitation on the optimizer $x^{\st}_{t}$ of the expected loss to ensure that the player chooses this point with probability at least $\thalf$.
Second, we transfer the probability mass assigned by the posterior to points not represented in the set $S_{t}$, towards $x^{\st}_{t}$.
The idea is that playing a point $x_{i}$ with $i \notin S_{t}$ is useless for the player, either because it has a very low probability mass, or because playing $x_{i}$ would not be (much) more profitable to the player than simply playing $x^{\st}_{t}$ on round $t$, \emph{even if she is told that $x_{i}$ is the optimal point at the end of the game}.

The main result of this section is the following regret bound attained by our algorithm.

\begin{theorem} \label{thm:algo}
  Let $F_{1},\ldots,F_{T} : [0,1] \mapsto [0,1]$ be a sequence of convex 
  loss functions drawn from an arbitrary prior distribution $\F$. 
  For any $\eps>0$, the Bayesian regret of the algorithm described in \cref{fig:algo} 
  over $\X_{\eps}$ is upper-bounded by
\begin{align*}
%	\E \left[ \sum_{t=1}^{T} F_t(X_t) - 
%		\min_{x \in \K} \sum_{t=1}^{T} F_t(x)\right ]
%	R(\F) \leq 
	10\sqrt{T} \log\frac{2K}{\eps} + 10\eps T \sqrt{ \log\frac{2K}{\eps} }
	~.
\end{align*}
In particular, for $\eps = 1/\sqrt{T}$ we obtain an upper bound of
$
%	R(\F) = 
\O( \sqrt{T} \log{T} )
%	.
$
over the regret.
\end{theorem}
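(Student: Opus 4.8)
The plan is to deploy the information-ratio machinery of \cref{sec:bayes-bandit}. With $\R_{t},\V_{t}$ as in \cref{eq:RtVt}, the algorithm's Bayesian regret equals $\E[\sum_{t=1}^{T}\R_{t}(X_{t})]$, so it suffices to prove a per-round bound of the form
$$
  \E_{t}[\R_{t}(X_{t})] \;\leq\; c\,\sqrt{\log\tfrac{2K}{\eps}}\cdot\sqrt{\E_{t}[\V_{t}(X_{t})]} \;+\; c\,\eps\sqrt{\log\tfrac{2K}{\eps}}
$$
for a universal constant $c$: summing over $t$, taking expectations, and invoking \cref{lem:russo} (which bounds $\E[\sum_{t}\sqrt{\E_{t}[\V_{t}(X_{t})]}]$ by $\sqrt{\tfrac12 T\log K}$) yields total regret at most $c\sqrt{\tfrac12 T\log K}\,\sqrt{\log\tfrac{2K}{\eps}}+c\eps T\sqrt{\log\tfrac{2K}{\eps}}$, which is $\O(\sqrt{T}\log\tfrac{2K}{\eps}+\eps T\sqrt{\log\tfrac{2K}{\eps}})$ since $\log K\le\log\tfrac{2K}{\eps}$. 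The choice $\eps=1/\sqrt T$ then makes $K=T$ and gives $\O(\sqrt T\log T)$, which together with the $2\eps T$ discretization overhead of \cref{sec:lip} also re-derives our main theorem. So everything reduces to the displayed per-round inequality.

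\emph{Step 1 --- reducing the regret to the set $S_{t}$.} Plugging \cref{eq:p} into $\E_{t}[\R_{t}(X_{t})]=\sum_{i}\pi_{i,t}\R_{t}(x_{i})$, writing $\R_{t}(x_{i})=(f_{t}(x_{i})-f_{t}(x^{\st}_{t}))+\R_{t}(x^{\st}_{t})$, and collecting terms gives the exact identity
$$
  \E_{t}[\R_{t}(X_{t})] \;=\; \tfrac12\sum_{i\in S_{t}}\alpha_{i,t}\bigl(f_{t}(x_{i})-f_{t}(x^{\st}_{t})\bigr) \;+\; \R_{t}(x^{\st}_{t}) .
$$
The first sum ranges over exactly the ``suspicious'' points, and every term is nonnegative because $x^{\st}_{t}$ minimizes $f_{t}$ over the grid. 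For the second term, expand $\R_{t}(x^{\st}_{t})=\sum_{j}\alpha_{j,t}(f_{t}(x^{\st}_{t})-f_{j,t}(x_{j}))$ and note that indices $j\notin S_{t}$ with $\alpha_{j,t}\ge\eps/K$ have $f_{j,t}(x_{j})>f_{t}(x^{\st}_{t})$ (so contribute nonpositively) while indices with $\alpha_{j,t}<\eps/K$ contribute at most $\eps$ in total; hence $\R_{t}(x^{\st}_{t})\le\sum_{j\in S_{t}}\alpha_{j,t}(f_{t}(x^{\st}_{t})-f_{j,t}(x_{j}))+\eps$. Thus, up to the additive $\eps$, bounding $\E_{t}[\R_{t}(X_{t})]$ amounts to bounding a weighted sum, over $i\in S_{t}$, of the nonnegative ``local discrepancies'' $f_{t}(x_{i})-f_{i,t}(x_{i})$ --- the key structural feature being that $i\in S_{t}$ forces the convex function $f_{i,t}$ to dip below the minimum of the convex function $f_{t}$ at the point $x_{i}$.

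\emph{Step 2 --- turning local discrepancies into information.} This is the heart of the proof and the only place convexity enters. Fix $t$ and $i\in S_{t}$; both $f_{t}$ and $f_{i,t}$ are convex (averages of convex functions), and $f_{i,t}(x_{i})\le f_{t}(x^{\st}_{t})\le\min_{[0,1]}f_{t}+\O(\eps^{2})$ because a convex function valued in $[0,1]$ cannot be much smaller between two adjacent grid points than at both of them. Hence \cref{lem:loc-to-glob} applies with $f:=f_{t}$, $g:=f_{i,t}$ (nudged down by the $\O(\eps^{2})$ slack), $x:=x_{i}$, and any probability measure $\nu$ supported on the interval between $x^{\st}_{t}$ and $x_{i}$; it lower-bounds $\norm{f_{t}-f_{i,t}}_{\nu}^{2}$ by $\nu(x^{\st}_{t})$ times the ``energy'' $\norm{f_{t}-f_{t}(x^{\st}_{t})}_{\nu}^{2}$ of the reference function on that interval --- a quantity that does not mention $f_{i,t}$. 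Since $\V_{t}(x_{\ell})\ge\alpha_{i,t}(f_{i,t}(x_{\ell})-f_{t}(x_{\ell}))^{2}$ at every grid point $x_{\ell}$, the left side $\norm{f_{t}-f_{i,t}}_{\nu}^{2}$ is at most $\alpha_{i,t}^{-1}$ times a $\nu$-average of the $\V_{t}(x_{\ell})$ over that interval, which (using $\pi_{i,t}\ge\tfrac12\alpha_{i,t}$ on $S_{t}$ and $\pi_{i^{\st}_{t},t}\ge\tfrac12$) is in turn controlled by $\E_{t}[\V_{t}(X_{t})]$. To upgrade these per-index bounds to the aggregate bound losing only a single logarithm, partition the indices $i\in S_{t}$ into $\O(\log\tfrac{2K}{\eps})$ dyadic buckets by the distance $\abs{x_{i}-x^{\st}_{t}}\in[\eps^{2},1]$ and the posterior mass $\alpha_{i,t}\in[\eps/K,1]$; within a bucket all indices share, up to constants, a common interval and a common mass, so a single measure $\nu$ --- putting a constant fraction of its mass on $x^{\st}_{t}$ (to keep $\nu(x^{\st}_{t})$ bounded below) and spreading the rest to capture the energy of $f_{t}$ at the bucket's scale --- together with Cauchy--Schwarz inside the bucket yields an $\O(1)$ information ratio for the whole bucket, rather than the $\O(\sqrt K)$ ratio of plain Thompson Sampling. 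A final Cauchy--Schwarz over the $\O(\log\tfrac{2K}{\eps})$ buckets turns this into the $\O(\sqrt{\log\tfrac{2K}{\eps}})$ ratio in the displayed inequality and absorbs the additive $\eps$-terms into the $\O(\eps\sqrt{\log\tfrac{2K}{\eps}})$ term.

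\emph{Main obstacle.} The difficulty lies entirely in Step 2. The bare consequence $\norm{f_{t}-f_{i,t}}_{\nu}^{2}\ge\nu(x^{\st}_{t})\,\norm{f_{t}-f_{t}(x^{\st}_{t})}_{\nu}^{2}$ is not enough by itself --- for a ``flat-then-steep'' reference $f_{t}$ a uniform $\nu$ loses a factor equal to the bucket size --- so $\nu$ must be chosen per bucket so that its right-hand side simultaneously captures a $\gtrsim 1/(\text{bucket size})$ fraction of the relevant $\V_{t}$-energy for \emph{every} index in the bucket while keeping the atom $\nu(x^{\st}_{t})$ from decaying, and balancing these demands against the weights $\alpha_{i,t}$ in the Cauchy--Schwarz step is the crux of the work. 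A separate (and independently interesting) degenerate case is when $f_{t}$ is nearly flat around its minimum so that \cref{lem:loc-to-glob} gives nothing: there one instead uses that a convex function cannot dip below a level by depth $\delta$ over a width much smaller than $\delta$, so a point $x_{i}\in S_{t}$ with $f_{i,t}(x_{i})$ well below the flat value still forces $f_{i,t}$ --- and hence $\V_{t}$ --- to be large over an interval of width $\gtrsim\delta$ around $x_{i}$. The remaining steps --- the exact $\eps$-bookkeeping, and the $\O(\eps^{2})$ grid-versus-continuum comparison for the minimizer of $f_{t}$ --- are routine.
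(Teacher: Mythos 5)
Your framework is right and your Step~1 matches the paper's \cref{lem:Rt-bound,lem:Vt-bound} almost verbatim: reduce to bounding $\sum_{i\in S}\alpha_i(f(x_i)-f_i(x_i))$ against $\sum_{i\in S}\alpha_i\norm{f-f_i}_\pi^2$, then feed the per-round bound into \cref{lem:russo}. But the entire content of the theorem lives in your Step~2, and there you have not given a proof --- you explicitly defer the ``crux'' (choosing the measures $\nu$ per bucket and balancing them against the $\alpha_{i,t}$) and a ``degenerate case'' (flat $f_t$) to unproven auxiliary claims. As sketched, the route also has concrete problems: (a) bucketing dyadically in \emph{both} the distance $\abs{x_i-x^{\st}}\in[\eps^2,1]$ and the mass $\alpha_{i,t}\in[\eps/K,1]$ produces $\Theta(\log^2\frac{2K}{\eps})$ buckets, so the final Cauchy--Schwarz over buckets yields an information ratio of order $\log\frac{2K}{\eps}$ rather than the claimed $\sqrt{\log\frac{2K}{\eps}}$, costing an extra $\sqrt{\log}$ in the theorem; (b) \cref{lem:loc-to-glob} requires $g(x)\le f^{\st} < f(x)$, and when $f_t$ is flat near $x^{\st}_t$ the strict inequality fails and the lemma's right-hand side degenerates --- your proposed fix (``a convex function cannot dip by depth $\delta$ over width $\ll\delta$'') is a separate quantitative lemma that you state imprecisely and do not prove, and it is false without using the range constraint and the location of the dip.

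The paper avoids both issues with two devices you should compare against. First, instead of bucketing, it defines for each $i\in S$ a single weight $w_i$ as the normalized partial sum $\beta_i=\sum_{j\in S_i}\pi_j(f(x_j)-f(x^{\st})+\eps_j)^2$ over the nested neighborhoods $S_i=S\cap[x_i,x^{\st}]$, applies \cref{lem:loc-to-glob} once per index with $\nu=\pi$ restricted to $S_i$ (the atom $\nu(x^{\st})\ge\half$ comes for free from the forced exploitation $\pi_{i^{\st}}\ge\half$), and then bounds $\sum_{i\in S}\pi_i/w_i=1+\sum_i(1-\beta_{i-1}/\beta_i)\le 1+\log(\beta_{K'}/\beta_1)$ by a telescoping-logarithm argument --- this is where the single $\log\frac{2K}{\eps}$ comes from, with no bucketing and no per-bucket measure design. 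Second, the flat case never arises because the lemma is applied to the regularized function $f_\eps(x)=f(x)+\eps\abs{x-x^{\st}}$, which has a strict unique minimizer and simultaneously supplies the lower bound $\eps_j\ge\eps/K$ that keeps $\beta_1\ge\thalf(\eps/K)^3$ and hence $\log(\beta_{K'}/\beta_1)\le 3\log\frac{2K}{\eps}$; the cost of the regularization is exactly the additive $\eps$-terms in the theorem. Until you supply the per-bucket measure construction and the flat-case lemma with full proofs (or adopt the $w_i$/$f_\eps$ mechanism), the argument is incomplete at its central step.
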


\begin{proof}
%First, we observe that restricting attention to the finite set $\K$ does not harm the regret too much (despite the fact that the loss functions are not necessarily Lipschitz).
%
%\begin{lemma} \label{lem:KtoX}
%The (Bayesian) regret of the algorithm with respect to the set $\K$ is at most $\eps T$ larger than its regret with respect to the entire $[0,1]$ interval.
%\end{lemma}
%
%See \cref{sec:lip} for a proof of this fact.
%Thus, in what follows we focus on bounding the Bayesian regret with respect to $\K$.
We bound the Bayesian regret of the algorithm (with respect to $\X_{\eps}$) on a per-round basis, via the technique described in \cref{sec:bayes-bandit}.
Namely, we fix a round $t$ and bound $\E_{t}[ \R_{t}(X_{t}) ]$ in terms of $\E_{t}[ \V_{t}(X_{t}) ]$ (see \cref{eq:RtVt}).
Since the round is fixed throughout, we omit the round subscripts from our notation, and it is understood that all variables are fixed to their state on round~$t$.

First, we bound the expected regret incurred by the algorithm on round $t$ in terms of the posterior $\alpha$ and the expected loss functions $f,f_{1},\ldots,f_{K}$.

\begin{lemma} \label{lem:Rt-bound}
With probability one, it holds that
\begin{align} \label{eq:Rt-bound}
	\E_{t}[ \R_{t}(X_{t}) ]
	\leq \sum_{i \in S} \alpha_{i} (f(x_{i}) - f_{i}(x_{i})) + \eps
	~.
\end{align}
\end{lemma}

The proofs of all of our intermediate lemmas are deferred to the end of the section.
Next, we turn to lower bound the information gain of the algorithm (as defined in \cref{eq:RtVt}).
Recall our notation $\norm{g}_{\nu}^{2}$ that stands for the $L_{2}$-norm of a function $g : \K \mapsto \reals$ with respect to a probability measure $\nu$ over $\K$; specifically, for a measure $\nu$ supported on the finite set $\X_{\eps}$ we have $\norm{g}_{\nu}^{2} = \sum_{i=1}^{K} \nu_{i} g^{2}(x_{i})$.
%also recall the definition of $\pi$ in \cref{eq:p}.

\begin{lemma} \label{lem:Vt-bound}
%The expected information gain of the algorithm on round $t$ has
With probability one, we have
\begin{align} \label{eq:Vt-bound}
	\E_{t}[ \V_{t}(X_{t}) ]
	\geq \sum_{i \in S} \alpha_{i} \norm{f-f_{i}}_{\pi}^{2}
	~.
\end{align}
%with probability one.
\end{lemma}

We now set to relate between the right-hand sides of \cref{eq:Rt-bound,eq:Vt-bound}, in a way that would allow us to use \cref{lem:russo} to bound the expected cumulative regret of the algorithm.  
In order to accomplish that, we first relate each regret term $f(x_{i}) - f_{i}(x_{i})$ to the corresponding information term $\norm{f-f_{i}}_{\pi}^{2}$.
Since $f$ and the $f_{i}$'s are all convex functions, this is given by the local-to-global lemma (\cref{lem:loc-to-glob}) which lower-bounds the global quantity $\norm{f-f_{i}}_{\pi}^{2}$ in terms of the local quantity $f(x_{i}) - f_{i}(x_{i})$.

To apply the lemma, we establish some necessary definitions.
For all $i \in S$, define~$\eps_{i} = \eps \, \abs{x_{i}-x^{\st}}$, and let $S_{i} = S \cap [x_{i},x^{\st}]$ be the neighborhood of $x_{i}$ that consists of all points in $S$ lying between~(and including) $x_{i}$ and the optimizer $x^{\st}$ of $f$.
Now, define weights $w_{i}$ for all $i \in S$ as follows: %$w_{i^{\st}} = \pi_{i^{\st}}$, and for all $i \in S$, $i \ne i^{\st}$,
\begin{align} \label{eq:w}
	\forall ~ i^{\st} \ne i \in S ~,
	\qquad
	w_{i} &\eq
	\sum_{j \in S_{i}} \pi_{j} \lr{
		\frac{  f(x_{j})-f(x^{\st}) + \eps_{j} }{ f(x_{i})-f(x^{\st}) + \eps_{i} } 
	}^{2} ~,
	\qquad\text{and}\qquad
	w_{i^{\st}} \eq \pi_{i^{\st}} 
	~.
\end{align}
With these definitions, \cref{lem:loc-to-glob} can be used to prove the following.

\begin{lemma} \label{lem:l2bound}
For all $i \in S$ it holds that
$
	\norm{f-f_{i}}_{\pi}^{2} 
	\geq \tfrac{1}{4} w_{i} ( f(x_{i}) - f_{i}(x_{i}) )^{2} - \eps^{2}
	.
$
\end{lemma}

Now, averaging the inequality of the lemma with respect to $\alpha$ over all $i \in S$ and using the fact that $\sqrt{a+b} \le \sqrt{a}+\sqrt{b}$ for any $a,b \ge 0$, we obtain
\begin{align*}
	\sqrt{ \sum_{i \in S}^{\phantom{.}} \alpha_{i} w_{i} ( f(x_{i}) - f_{i}(x_{i}) )^{2} }
	\leq 2 \sqrt{\sum_{i \in S}^{\phantom{.}} \alpha_{i} \norm{f-f_{i}}_{\pi}^{2} } 
		+ 2\eps
	~.
\end{align*}
On the other hand, the Cauchy-Schwarz inequality gives
\begin{align*}
	\sum_{i \in S} \alpha_{i} ( f(x_{i}) - f_{i}(x_{i}) )
	\leq \sqrt{\sum_{i \in S}^{\phantom{.}} \frac{\alpha_{i}}{w_{i}}}
		\cdot \sqrt{ \sum_{i \in S}^{\phantom{.}} \alpha_{i} w_{i} ( f(x_{i}) - f_{i}(x_{i}) )^{2} }
	~.
\end{align*}
Combining the two inequalities and recalling \cref{lem:Rt-bound,lem:Vt-bound}, we get
%This, together with \cref{lem:l2bound,lem:log-sum}, gives
\begin{align} \label{eq:final}
	\E_{t}[ \R_{t}(X_{t}) ]
%	\leq \sum_{i \in S} \alpha_{i} ( f(x_{i}) - f_{i}(x_{i}) )
	\leq 2\sqrt{\sum_{i \in S}^{\phantom{.}} \frac{\alpha_{i}}{w_{i}}} 
	\cdot \lr{ 
		\sqrt{ \E_{t}[\V_{t}(X_{t})] } 
		+ \eps
		}
	+ \eps
	~.
\end{align}
It remains to upper bound the sum $\sum_{i \in S} \frac{\alpha_{i}}{w_{i}}$. 
This is accomplished in the following lemma.

\begin{lemma} \label{lem:log-sum}
We have
%%%%\begin{align*}
$$
%	\sum_{i \in S} \frac{\alpha_{i}}{w_{i}}
	\sum_{i \in S} \frac{\alpha_{i}}{w_{i}}
	\leq 20\log\frac{2K}{\eps}
	~.
$$
%%%%\end{align*}
\end{lemma}
Finally, plugging the bound of the lemma into \cref{eq:final} and using \cref{lem:russo}, yields the stated regret bound.
\end{proof}

\subsection{Remaining Proofs}

We first give the proof of \cref{lem:Rt-bound}.
Recall that for readability, we omit the subscripts specifying the round number $t$ from our notation.

%\medskip
\begin{proof}%[Proof of \cref{lem:Rt-bound}]
The expected instantaneous regret can be written in terms of the distributions $\pi$ and $\alpha$, and the functions $f_{1},\ldots,f_{K}$ and $f$ as follows:
\begin{align*}
	\E_{t}[ \R_{t}(X_{t}) ]
	&\eq \sum_{i=1}^{K} \pi_{i} \, \R_{t}(x_{i}) \\
	&\eq \sum_{i=1}^{K} \pi_{i} \, \E_{t}[ F_{t}(x_{i}) ]
		- \sum_{i=1}^{K} \alpha_{i} \, \E_{t}[ F_{t}(x_{i}) \mid X^{\st}=x_{i}] \\
	&\eq \sum_{i=1}^{K} \pi_{i} f(x_{i}) - \sum_{i=1}^{K} \alpha_{i} f_{i}(x_{i})
	~.
\end{align*}
Next, we consider the first sum in the right-hand size of the above, that corresponds to the expected loss incurred by the algorithm.
Since $\pi$ is obtained from $\alpha$ by transferring probability mass towards~$x^{\st}$ (whose loss is the smallest), the expected loss of the algorithm has
\begin{align*}
	\sum_{i=1}^{K} \pi_{i} f(x_{i})
	&\eq \thalf \sum_{i \in S} \alpha_{i} f(x_{i}) + (1 - \tfrac{1}{2}q(S)) f(x^{\st}) \\
	&\leq \sum_{i \in S} \alpha_{i} f(x_{i})
		+ (1 - q(S)) f(x^{\st}) \\
	&\eq \sum_{i \in S} \alpha_{i} f(x_{i}) + \sum_{i \notin S} \alpha_{i} f(x^{\st}) 
	~.
\end{align*}
Also, since for each $i \notin S$ we either have $\alpha_{i} < \tfrac{\eps}{K}$ or $f(x^{\st}) - f_{i}(x_{i}) < 0$ (while both quantities are trivially bounded by $1$), 
\begin{align*}
	\sum_{i \notin S} \alpha_{i} (f(x^{\st}) - f_{i}(x_{i}))
	\leq \eps
	~.
\end{align*}
Hence, for the regret we have
\begin{align*}
	\E_{t}[ \R_{t}(X_{t}) ]
	&\eq \sum_{i=1}^{K} \pi_{i} f(x_{i}) - \sum_{i=1}^{K} \alpha_{i} f_{i}(x_{i}) \\
	&\leq \sum_{i \in S} \alpha_{i} (f(x_{i}) - f_{i}(x_{i}))
		+ \sum_{i \notin S} \alpha_{i} (f(x^{\st}) - f_{i}(x_{i})) \\
%	&\leq \sum_{i \in S} \alpha_{i} (f(x_{i}) - f_{i}(x_{i})) + \eps \\
	&\leq \sum_{i \in S} \alpha_{i} (f(x_{i}) - f_{i}(x_{i})) + \eps
	~. \qedhere
\end{align*}
\end{proof}

Next, we prove \cref{lem:Vt-bound}.

%\medskip
\begin{proof}%[Proof of \cref{lem:Vt-bound}]
The expected instantaneous information gain can be written as
\begin{align*}
	\E_{t}[ \V_{t}(X_{t}) ]
	&\eq \sum_{i=j}^{K} \pi_{j} \, 
		\var_{t}\big( \E_{t}[F_{t}(x_{j}) \mid X^{\st}] \big) \\
	&\eq \sum_{i=1}^{K}\sum_{j=1}^{K} \alpha_{i} \pi_{j} \, \big( 
		\E_{t}[F_{t}(x_{j}) \mid X^{\st}=x_{i}] - \E_{t}[F_{t}(x_{j})]
		\big)^{2} \\
	&\eq \sum_{i=1}^{K} \sum_{j=1}^{K} \alpha_{i} \pi_{j} (f_{i}(x_{j}) - f(x_{j}))^{2}
	~.
\end{align*}
The lemma then follows from
\begin{align*}
	\sum_{i=1}^{K} \sum_{j=1}^{K} 
		\alpha_{i} \pi_{j} (f(x_{j}) - f_{i}(x_{j}))^{2}
	\geq \sum_{i \in S} \alpha_{i} \sum_{j=1}^{K} \pi_{j} (f(x_{j}) - f_{i}(x_{j}))^{2} 
	\eq \sum_{i \in S} \alpha_{i} \norm{f-f_{i}}_{\pi}^{2} 
	~. &\qedhere
\end{align*}
%where the final equality holds since $\pi_{i} = \frac{1}{2} \alpha_{i}$ for all $i \in S$.
\end{proof}

%We now give the proofs of the lemmas used in our proof of the main theorem. We begin with \cref{lem:Rt-bound}.
%Due to space constraints, we defer the proof of \cref{lem:Rt-bound,lem:Vt-bound} to \cref{sec:omit}, and turn to prove \cref{lem:l2bound}. 
We now turn to prove \cref{lem:l2bound}. 
The proof uses the local-to-global lemma (\cref{lem:loc-to-glob}) discussed earlier in \cref{sec:loc-to-glob}.

%\medskip
\begin{proof}%[Proof of \cref{lem:l2bound}]
The lemma holds trivially for $i=i^{\st}$, as we defined $w_{i^{\st}} = \pi_{i^{\st}}$, whence
\begin{align*}
	\norm{f-f_{i^{\st}}}_{\pi}^{2}
	\geq \pi_{i^{\st}} ( f(x^{\st}) - f_{i^{\st}}(x^{\st}) )^{2}
	\geq \tfrac{1}{4} w_{i^{\st}} ( f(x^{\st}) - f_{i^{\st}}(x^{\st}) )^{2} - \eps^{2}
	~.
\end{align*}
Therefore, in what follows we assume that $i \in S$ and $i \ne i^{\st}$.

Consider a regularized version of the function $f$, given by $f_{\eps}(x) = f(x) + \eps\abs{x-x^{\st}}$.
Notice that $f_{\eps}$ is convex, and has a unique minimum at $x^{\st}$ with $f_{\eps}(x^{\st}) = f(x^{\st})$.
Since $\pi_{i^{\st}} \ge \half$ by construction (the algorithm exploits with probability $\half$), and for all $i \in S$ we have $f_{i}(x_{i}) \le f_{\eps}(x^{\st}) < f_{\eps}(x_{i})$, we can apply \cref{lem:loc-to-glob} to the functions $f_{\eps}$ and $f_{i}$ and obtain
\begin{align*}
	\frac{ \sum_{j \in S_{i}} \pi_{j} ( f_{\eps}(x_{j}) - f_{i}(x_{j}) )^{2} }
		{ ( f_{\eps}(x_{i}) - f_{i}(x_{i}) )^{2} }
	\geq
	\half \cdot
	\frac{ \sum_{j \in S_{i}} \pi_{j} (f_{\eps}(x_{j})-f_{\eps}(x^{\st}))^{2} }
		{ (f_{\eps}(x_{i})-f_{\eps}(x^{\st}))^{2} } 
	~.
\end{align*}
Now, notice that $f_{\eps}(x_{j}) - f_{\eps}(x^{\st}) = f(x_{j}) - f(x^{\st}) + \eps \, \abs{x_{j} - x^{\st}}$ for all $j$; hence, recalling \cref{eq:w}, the right-hand side above equals $\half w_{i}$.
Rearranging and using $\norm{f_{\eps}-f_{i}}_{\pi}^{2} \ge \sum_{j \in S_{i}} \pi_{j} ( f_{\eps}(x_{j}) - f_{i}(x_{j}) )^{2}$ gives
\begin{align} \label{eq:l2bound1}
	\norm{f_{\eps}-f_{i}}_{\pi}^{2}
	\geq \thalf w_{i} ( f_{\eps}(x_{i}) - f_{i}(x_{i}) )^{2} 
	~.
\end{align}
To obtain the lemma from \cref{eq:l2bound1}, observe that by the triangle inequality,
\begin{align*}
	\norm{f-f_{i}}_{\pi}
	\geq \norm{f_{\eps}-f_{i}}_{\pi} - \norm{f_{\eps}-f}_{\pi}
	\geq \norm{f_{\eps}-f_{i}}_{\pi} - \eps 
	~,
\end{align*}
so using $(a+b)^{2} \le 2(a^{2}+b^{2})$ we can upper bound the left-hand side of \cref{eq:l2bound1} as $\norm{f_{\eps}-f_{i}}_{\pi}^{2} \le 2\norm{f-f_{i}}_{\pi}^{2} + 2\eps^{2}$.
On the right hand-side of the same inequality, we can use the lower bound $( f_{\eps}(x_{i}) - f_{i}(x_{i}) )^{2} \ge ( f(x_{i}) - f_{i}(x_{i}) )^{2}$ that follows from $f_{\eps}(x_{i})-f_{i}(x_{i}) \ge f(x_{i})-f_{i}(x_{i}) \ge 0$.
Combining these observations with \cref{eq:l2bound1}, we have
%%%%\begin{align*}
$
	\norm{f-f_{i}}_{\pi}^{2}
	\ge \tfrac{1}{4} w_{i} ( f(x_{i}) - f_{i}(x_{i}) )^{2} - \eps^{2}
	,
%	~,
$
%%%%\end{align*}
which concludes the proof.
\end{proof}
Finally, we prove \cref{lem:log-sum}.

%\medskip
\begin{proof}%[Proof of \cref{lem:log-sum}]
Since $\alpha_{i} \le 2\pi_{i}$ for all $i \in S$, it is enough to bound the sum $\sum_{i \in S} \tfrac{\pi_{i}}{w_{i}}$.
We decompose this sum into three disjoint parts: the term corresponding to $i=i^{\st}$ (in case $i^{\st} \in S$) that equals $1$ as $w_{i^{\st}} = \pi_{i^{\st}}$ by definition, a sum over the indices $i \in S$ such that $x_i < x^{\st}$, and a sum over those such that $x_i > x^{\st}$. The proof is identical for both the latter sums, thus we focus on the set $S'$ of indices such that $x_i > x^{\st}$.
Up to reindexing, we can assume that $S' = \set{1,\ldots,K'}$ for some $K' \le K$, and the corresponding points are such that $x^{\st} < x_1 < \ldots < x_{K'}$.
By our definition of $w_{i}$ (see \cref{eq:w}), we have
\begin{align*}
	\forall ~ i \in S' ~,
	\qquad
	\frac{\pi_{i}}{w_{i}}
	\eq \frac{ \pi_{i} (f(x_{i})-f(x^{\st})+\eps_{i})^{2} }
		{ \sum_{j=1}^{i} \pi_{j} (f(x_{j})-f(x^{\st})+\eps_{j})^{2} }
	~.
\end{align*}
Observe that for all $i \in S'$ it holds that $\eps_{i} = \eps \, \abs{x_{i}-x^{\st}} \ge \tfrac{\eps}{K}$, as the points $x_{1},\ldots,x_{K}$ lie on an equally-spaced grid of the interval (and $x_{i} \ne x^{\st}$ since $i^{\st} \notin S'$).
Recall also that by construction $\pi_{i} \ge \thalf \alpha_{i} \ge \tfrac{\eps}{2K}$ for all $i \in S$.
Hence, we have 
$$
	\forall ~ i \in S' ~,
	\qquad
	\thalf (\tfrac{\eps}{K})^{3}
	\leq \pi_{i} (f(x_{i})-f(x^{\st})+\eps_{i})^{2} 
	\leq 4 \pi_{i}
	~.
$$
Now, denote $\beta_{i} = \sum_{j=1}^{i} \pi_{j} (f(x_{j})-f(x^{\st})+\eps_{j})^{2}$, for which $\thalf (\tfrac{\eps}{K})^{3} \le \beta_{1} \le \ldots \le \beta_{K'} \le 4$.
Thus, we have
\begin{align*}
	\sum_{i=1}^{K'} \frac{\pi_{i}}{w_{i}}
	\eq 1 + \sum_{i=2}^{K'} \frac{\beta_{i} - \beta_{i-1}}{\beta_{i}}
	\eq 1 + \sum_{i=2}^{K'} \lr{ 1 - \frac{\beta_{i-1}}{\beta_{i}} }
	\leq 1 + \sum_{i=2}^{K'} \log\frac{\beta_{i}}{\beta_{i-1}}
	\eq 1 + \log\frac{\beta_{K'}}{\beta_{1}}
	~,
\end{align*}
where the inequality follows from the fact that $\log{z} \le z-1$ for $0 < z \le 1$.
Since $\beta_{K'}/\beta_{1} \le (\tfrac{2K}{\eps})^{3}$, we can bound the right-hand side by $1+3\log\tfrac{2K}{\eps}$.
The lemma now follows from applying the same bound to the other part of the total sum (over the indices $i$ such that $x_i < x^{\st}$) and recalling the possible term corresponding to $i=i^{\st}$.
\end{proof}

\section{Discussion and Open Problems} \label{sec:conc}

We proved that the minimax regret of adversarial one-dimensional
bandit convex optimization is $\wt O(\sqrt{T})$ by designing an
algorithm for the analogous Bayesian setting and then using minimax
duality to upper-bound the regret in the adversarial setting. Our
work raises interesting open problems.
The main open problem is whether one can generalize our analysis from
the one-dimensional case to higher dimensions (say, even $n=2$).
While much of our analysis generalizes to higher dimensions, the
key ingredient of our proof, namely the local-to-global lemma 
(\cref{lem:loc-to-glob}) is inherently one-dimensional.
%
%While our bound in the adversarial setting is non-constructive, 
We hope that the components of our analysis, and especially the
local-to-global lemma, will inspire the design of efficient algorithms
for adversarial bandit convex optimization, even though
our end result is a non-constructive bound.

The Bayesian algorithm used in our analysis is a modified version of
the classic Thompson Sampling strategy. A second open question is
whether or not the same regret guarantee can be obtained by vanilla
Thompson Sampling, without any modification. However, if it turns out
that unmodified Thompson Sampling is sufficient, the proof is likely
to be more complex: our analysis is greatly simplified by the
observation that the instantaneous regret of our algorithm is
controlled by its instantaneous information gain on each and every 
round---a claim that does not hold for Thompson Sampling.
%We defer the formal details
%of this argument to the full version of the paper, and leave the
%analysis of Thompson Sampling with convex functions for future work.

Finally, we note that our reasoning together with Proposition 5 of
\cite{russo2014information} allows to recover effortlessly Theorem 4
of \cite{BCK12}, which gives the worst-case minimax regret for online
linear optimization with bandit feedback on a discrete set in
$\reals^n$. It would be interesting to see if this proof strategy also
allows to exploit geometric structure of the point set. For instance,
could the techniques described here give an alternative proof of
Theorem 6 of \cite{BCK12}?

%\tk{discuss possible implications to zero-order (stochastic) optimization}

\subsection*{Acknowledgements}

We thank Ronen Eldan and Jian Ding for helpful discussions during the early stages of this work.

%\clearpage
\bibliographystyle{abbrvnat}
\bibliography{bco}

%\clearpage
\appendix

\section{Proof of \texorpdfstring{\cref{thm:minimax}}{}}
\label{sec:sion}

The proof relies on Sion's generalization of von Neumann's minimax theorem, which we state here for completeness (see Corollary~3.3 of \citealp{sion1958general}, or \citealp{Komiya1988}).

\begin{theorem}%[\citealp{sion1958general}]
Let $X$ and $Y$ be convex sets in two linear topological spaces, and suppose that $X$ is compact.
Let $f$ be a real-valued function on $X \times Y$ such that
\begin{enumerate}[label=(\roman*),nosep]
\item
$f(x,\cdot)$ is upper semicontinuous and  concave on $Y$ for each $x \in X$;
\item
$f(\cdot,y)$ is lower semicontinuous and  convex on $X$ for each $y \in Y$.
\end{enumerate}
Then,
$$
\min_{x \in X} \sup_{y \in Y} f(x,y)
\eq
\sup_{y \in Y} \min_{x \in X} f(x,y)
~.
$$
\end{theorem}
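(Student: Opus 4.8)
The plan is to prove the nontrivial inequality $\min_{x} \sup_y f(x,y) \le \sup_y \min_x f(x,y)$; the reverse (weak duality) is immediate, since $\min_x f(x,y)\le f(x',y)\le\sup_y f(x',y)$ for every $x'$, and one takes $\sup$ over $y$ and then $\min$ over $x'$. First I would note that the minima are attained: since $f(\cdot,y)$ is lower semicontinuous for each $y$, the function $\sup_y f(\cdot,y)$ is lower semicontinuous and hence attains its minimum on the compact set $X$, and likewise for the inner minimum on the right-hand side. Then I would argue by contradiction: suppose there is $\lambda$ with $\sup_y\min_x f(x,y) < \lambda < \min_x\sup_y f(x,y)$. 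For each $y\in Y$ set $O_y = \{x\in X: f(x,y)>\lambda\}$, which is open by lower semicontinuity of $f(\cdot,y)$. The left inequality says $\min_x f(x,y)<\lambda$ for every $y$, so $O_y\ne X$ for all $y$; the right inequality says $\sup_y f(x,y)>\lambda$ for every $x$, so $\{O_y\}_{y\in Y}$ covers $X$. By compactness there is a finite subcover $O_{y_1},\dots,O_{y_n}$; I take $n$ minimal, so that $n\ge 2$ and no $n-1$ of these sets cover $X$.

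The core is a ``merging'' step that contradicts minimality by replacing $O_{y_{n-1}}$ and $O_{y_n}$ with a single set $O_{y_s}$, where $y_s = (1-s)y_{n-1}+s y_n$ for a suitable $s\in[0,1]$. Let $D = X\setminus\bigcup_{i\le n-2}O_{y_i}$; this set is closed, convex (it is the intersection of the convex sublevel sets $\{f(\cdot,y_i)\le\lambda\}$), nonempty, disjoint from every $O_{y_i}$ with $i\le n-2$, and contained in $O_{y_{n-1}}\cup O_{y_n}$. It suffices to find $s$ with $D\subseteq O_{y_s}$, for then $X = D\cup\bigcup_{i\le n-2}O_{y_i}$ is covered by the $n-1$ sets $O_{y_s},O_{y_1},\dots,O_{y_{n-2}}$, a contradiction. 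On the part $D\cap O_{y_{n-1}}\cap O_{y_n}$ the inclusion is automatic, by quasi-concavity of $f(x,\cdot)$: $f(x,y_s)\ge\min(f(x,y_{n-1}),f(x,y_n))>\lambda$. The rest of $D$ splits into the disjoint closed convex sets $D' = \{x\in D: f(x,y_n)\le\lambda\}$ and $D'' = \{x\in D: f(x,y_{n-1})\le\lambda\}$; disjointness holds because $D$ meets no common sublevel set of $y_{n-1}$ and $y_n$. For each $x$, quasi-concavity makes $I_x := \{s\in[0,1]: f(x,y_s)>\lambda\}$ a sub-interval of $[0,1]$, containing $0$ if $x\in D'$ and $1$ if $x\in D''$. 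By the one-dimensional Helly property, $J' := \bigcap_{x\in D'}I_x$ is an interval containing $0$ and $J'' := \bigcap_{x\in D''}I_x$ is an interval containing $1$, and any $s\in J'\cap J''$ completes the merging step.

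The crux is therefore to show $J'\cap J''\ne\emptyset$. If not, there is $s_0$ with $\sup J' < s_0 < \inf J''$, yielding $x'\in D'$ and $x''\in D''$ with $f(x',y_{s_0})\le\lambda$ and $f(x'',y_{s_0})\le\lambda$. The segment $[x',x'']$ lies in the convex set $D$, and by quasi-convexity of $f(\cdot,y_{s_0})$ every point $\xi$ of this segment has $f(\xi,y_{s_0})\le\lambda$; parametrize $\xi=\xi(r)$, $r\in[0,1]$. Using quasi-convexity and lower semicontinuity of $f(\cdot,y_n)$ and $f(\cdot,y_{n-1})$, the sets $\{r: f(\xi(r),y_n)\le\lambda\}$ and $\{r: f(\xi(r),y_{n-1})\le\lambda\}$ are closed sub-intervals of $[0,1]$; the first contains $0$ but not $1$ (because $f(x',y_n)\le\lambda$ while $f(x'',y_n)>\lambda$, since $x''\in D''$ forces $x''\notin O_{y_{n-1}}$ and hence $x''\in O_{y_n}$), so it equals $[0,r_1]$ with $r_1<1$, and by the symmetric argument the second equals $[r_2,1]$ with $r_2>0$. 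A short dichotomy finishes the proof. If $r_1\ge r_2$, any $r^*\in[r_2,r_1]$ gives a point $\xi(r^*)\in D$ with $f(\xi(r^*),y_{n-1})\le\lambda$ and $f(\xi(r^*),y_n)\le\lambda$, contradicting $D\subseteq O_{y_{n-1}}\cup O_{y_n}$. If $r_1<r_2$, any $r^*\in(r_1,r_2)$ gives $\xi(r^*)$ with $f(\xi(r^*),y_{n-1})>\lambda$ and $f(\xi(r^*),y_n)>\lambda$, so by quasi-concavity $f(\xi(r^*),y_{s_0})\ge\min(f(\xi(r^*),y_{n-1}),f(\xi(r^*),y_n))>\lambda$, contradicting $f(\xi(r^*),y_{s_0})\le\lambda$. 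Either way we reach a contradiction, so $J'\cap J''\ne\emptyset$, and the proof is complete.

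The only ingredients beyond convexity of $X,Y$ and compactness of $X$ are: sublevel sets of a lower semicontinuous quasi-convex function along a segment are closed intervals; superlevel sets of a quasi-concave function along a segment are intervals; and the Helly property of intervals in $\reals$. Notably no joint (semi)continuity of $f$ is needed, which is what makes this elementary argument (in the spirit of \citealp{Komiya1988}) shorter than Sion's original \citep{sion1958general}. I expect the main obstacle to be organizational rather than conceptual: carefully managing the two nested one-dimensional parametrizations---$s$ along the segment $[y_{n-1},y_n]$ and $r$ along the segment $[x',x'']$---and verifying that each sub/superlevel set has the claimed interval shape and endpoints. Once that bookkeeping is in place, the geometric content reduces to the two-line dichotomy above.
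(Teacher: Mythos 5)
You are attempting to prove a statement that the paper itself does not prove (it cites \citealp{sion1958general} and \citealp{Komiya1988} for it), so the only question is whether your argument stands on its own. Most of it does: the weak-duality direction, the open cover $\{O_y\}_{y\in Y}$, the minimal finite subcover, the convexity/closedness of $D$, $D'$, $D''$, and the segment dichotomy in the $(r_1,r_2)$ step are all correct (with the understanding that ``$n$ minimal'' means minimal over \emph{all} finite subcovers from the family $\{O_y\}_{y\in Y}$, which you need both for $D\ne\emptyset$ and for the final contradiction, since $O_{y_s}$ is not one of the originally chosen sets; also, no Helly property is needed for $J',J''$ --- an arbitrary intersection of intervals sharing a point is an interval). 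The genuine gap is at the crux: from $J'\cap J''=\emptyset$ you conclude that there exists $s_0$ with $\sup J' < s_0 < \inf J''$. This fails when the two intervals are adjacent and together cover $[0,1]$, e.g.\ $J'=[0,s_0]$ and $J''=(s_0,1]$. Nothing you have established rules this configuration out: each $I_x$ is an interval (indeed a relatively open one, since $s\mapsto f(x,y_s)$ is concave and u.s.c., hence continuous on $[0,1]$), but an infinite intersection of such intervals can be closed at its inner endpoint, and the two intersections can abut. In that configuration every $s\in[0,1]$ lies in $J'$ or in $J''$, so there is no single parameter at which you can exhibit both an $x'\in D'$ and an $x''\in D''$ with $f(\cdot,y_{s})\le\lambda$, and the segment argument never gets started. (The borderline sub-case $\sup J'=\inf J''=s_0$ with $s_0$ in \emph{neither} interval is harmless: just take that $s_0$; your strict inequalities merely need rewording there.)

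The adjacent case is repairable, but it needs real additional input rather than a rewording. For instance, with $J'=[0,s_0]$, $J''=(s_0,1]$: take $s_k\downarrow s_0$ and $x'_k\in D'$ with $f(x'_k,y_{s_k})\le\lambda$; since $f(x'_k,y_{n-1})>\lambda$, quasi-concavity in $y$ forces $f(x'_k,y_s)\le\lambda$ for every $s\ge s_k$; by compactness of $D'$ extract a cluster point $x'\in D'$ and use lower semicontinuity of $f(\cdot,y_s)$ to get $f(x',y_s)\le\lambda$ for every fixed $s>s_0$; finally concavity of $s\mapsto f(x',y_s)$ (its l.s.c.\ behaviour at $s_0$) gives $f(x',y_{s_0})\le\lambda$, contradicting $s_0\in J'$. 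Note this patch re-uses compactness and l.s.c.\ in $x$ and, in its last step, genuine concavity in $y$ rather than quasi-concavity --- fine under the stated hypotheses, but it shows the boundary case is not a triviality. Alternatively, one can restructure so the case never arises, e.g.\ by running the argument with two levels $\lambda<\mu<\min_x\sup_y f$ as in Komiya's proof. Apart from this missing case, your proof is sound, and the overall route (minimal subcover plus merging two sets $O_{y_{n-1}},O_{y_n}$ into one $O_{y_s}$) is a legitimate elementary approach in the spirit of the reference the paper cites.
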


\begin{proof}[Proof of \cref{thm:minimax}]
For a metric space $A$ we denote by $\Del(A)$ the set of Borel probability measures on $A$.
Let $\mathcal{C}$ be the space of convex functions from the compact $\K$ to $[0,1]$. A deterministic player's strategy is specified by a sequence of operators $a_1, \hdots, a_T$, where in the full information case $a_s : \mathcal{C}^{s-1} \rightarrow \K$, and in the bandit case $a_s : [0,1]^{s-1} \rightarrow \K$. We denote by $\mathcal{A}$ the set of such sequences of operators, which is compact in the product topology. The minimax regret can be written as:
\begin{equation} \label{eq:minimax}
\min_{u \in \Del(\mathcal{A})}
\sup_{f_{1:T} \in \mathcal{C}^{T}}
\E [R_T] ~,
\end{equation}
where $R_{T}$ denotes the induced $T$-round regret, and the expectation is with respect to the random draw of a player's strategy from $u$.
Using Sion's minimax theorem, we deduce
%\footnote{It is easy to verify that the conditions of Sion's minimax theorem are satisfied. For example $\mathcal{C}$ is compact by Arzela-Ascoli, and thus by Levy-Prokhorov $\Del(\mathcal{C})$ is compact for the weak topology.},
\cref{eq:minimax} is equal to
\begin{equation} \label{eq:maximin}
\sup_{\F \in \Del(\mathcal{C}^T)}
\min_{u \in \Del(\mathcal{A})}
\E [R_T] \, ,
\end{equation}
where the expectation is with respect to both the random draw of a player's strategy from $u$, and the random draw of the sequence of losses from $\F$. 
%In the above display $\nu$ can be naturally interpreted as a prior distribution, and the player's strategy $u$ can be adapted to this prior. For this reason we refer to \eqref{eq:maximin} as the Bayesian regret. 
%Note that in \eqref{eq:maximin} one could replace $\Del(\mathcal{A})$ by $\mathcal{A}$, but we find it easier to analyze randomized strategies even in the Bayesian context (at least for the bandit case). 
Finally, to convert the statement above to the statement of \cref{thm:minimax}, we  invoke Kuhn's theorem on the payoff equivalence of behavioral strategies to general randomized strategies. More precisely, we apply the continuum version of this theorem, established by \cite{aumann64}.
%(see \cite{MSZ2013}) and the existence of regular conditional distributions (see \cite{Durrett}).
%Using Sion's minimax theorem%
%\footnote{It is easy to verify that the conditions of Sion's minimax theorem are satisfied. For example $\mathcal{C}$ is compact by Arzela-Ascoli, and thus by Levy-Prokhorov $\Del(\mathcal{C})$ is compact for the weak topology.},
%\eqref{eq:minimax} is equal to
%\begin{equation} \label{eq:maximin}
%\sup_{\nu \in \Del(\mathcal{C}^T)}
%\inf_{u \in \Del(\mathcal{A})}
%\E R_T ,
%\end{equation}
%where the expectation is with respect to both the random draw of a player's strategy from $u$, and the random draw of the sequence of losses from $\nu$. In the above display $\nu$ can be naturally interpreted as a prior distribution, and the player's %strategy $u$ can be adapted to this prior. For this reason we refer to \eqref{eq:maximin} as the Bayesian regret. Note that in \eqref{eq:maximin} one could replace $\Del(\mathcal{A})$ by $\mathcal{A}$, but we find it easier to analyze randomized %strategies even in the Bayesian context (at least for the bandit case).
\end{proof}

\newcommand{\dist}[1]{\mathbb{#1}}

\section{Information Theoretic Analysis of Bayesian Algorithms} 
\label{sec:russo}

In this section we prove \cref{lem:russo}, restated here.

\begin{repthm}[\citealp{russo2014information}]{lem:russo}
For any player strategy and any prior distribution~$\F$, it holds that 
$$\E\left[ \sum_{t=1}^{T} \sqrt{ \E_{t}[\V_{t}(X_{t})] } \right] \leq \sqrt{\thalf T \log{K}} ~.$$
\end{repthm}

The proof follows the analysis of \cite{russo2014information}.
For the proof, we require the following definition.
%\tk{TODO: introduce info theory tools used in the proof}
Let
$$
	\forall ~ x \in \K ~, \qquad
	I_{t}(x) 
	\eq \info_{t}(F_{t}(x) ; X^{\st})
$$
be the mutual information between $X^{\st}$ and the player's loss on round $t$ upon choosing the action~$x \in \K$, conditioned on the history $\H_{t-1}$ (thus, $I_{t}(x)$ is a random variable, measurable with respect to $\H_{t-1}$).
Intuitively, $I_{t}(x)$ is the expected amount of information on $X^{\st}$ revealed by playing $x$ on round $t$ of the game and observing the feedback $F_{t}(x)$.
%
%With this definition, we can prove the following.

Before proving \cref{lem:russo}, we first show an analogous claim for the information terms $I_{t}(X_{t})$.

\begin{lemma} \label{lem:russo1}
%For any sequence of random variables $X_{1:T}$ adapted to the filtration $\H_{1:T}$, it holds that
We have
$$
	\E\left[ \sum_{t=1}^{T} \sqrt{\E_{t}[I_{t}(X_{t})]} \right]
	\leq \sqrt{T \log{K}} 
	~.
$$
\end{lemma}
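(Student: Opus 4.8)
The plan is to relate the per-round quantity $\E_{t}[I_{t}(X_{t})]$ to the mutual information that round~$t$ reveals about $X^{\st}$, sum these contributions by the chain rule for mutual information, bound the total by $\ent(X^{\st})\le\log K$ (valid since $X^{\st}$ takes values in the $K$-point grid $\X_{\eps}$), and finally pass from the sum of $\E_{t}[I_{t}(X_{t})]$ to the sum of its square roots by Cauchy--Schwarz.

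The first and central step is a per-round information identity. Write $O_{t}=(X_{t},F_{t}(X_{t}))$ for the observation available to the player after round~$t$, so that $\H_{t}=\sigma(O_{1:t})$; I would prove
$$
	\E\big[\, \E_{t}[I_{t}(X_{t})] \,\big] \eq \info(X^{\st} ; O_{t} \mid \H_{t-1}) ~.
$$
This follows from the chain rule for conditional mutual information, $\info(X^{\st};O_{t}\mid\H_{t-1})=\info(X^{\st};X_{t}\mid\H_{t-1})+\info(X^{\st};F_{t}(X_{t})\mid X_{t},\H_{t-1})$, via two observations: the first term vanishes because $X_{t}$ is drawn from the $\H_{t-1}$-measurable distribution $\pi_{t}$ using fresh randomness and is therefore conditionally independent of $X^{\st}$ given $\H_{t-1}$; and the second term equals $\E\big[\E_{x\sim\pi_{t}}[\info_{t}(F_{t}(x);X^{\st})]\big]=\E[\E_{t}[I_{t}(X_{t})]]$, since conditioning on $X_{t}=x$ replaces $F_{t}(X_{t})$ by $F_{t}(x)$ and the outer expectation over $X_{t}$ averages against $\pi_{t}$.

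Given this identity, I would sum over $t=1,\ldots,T$ and apply the (unconditional) chain rule, using $\H_{t-1}=\sigma(O_{1:t-1})$, to obtain
$$
	\sum_{t=1}^{T} \E\big[\,\E_{t}[I_{t}(X_{t})]\,\big]
	\eq \sum_{t=1}^{T} \info(X^{\st};O_{t}\mid O_{1:t-1})
	\eq \info(X^{\st};O_{1:T})
	\leq \ent(X^{\st})
	\leq \log K ~,
$$
where the last two steps use that mutual information is at most the entropy of either argument and that $X^{\st}$ is supported on a set of size $K$. Finally, for every realization Cauchy--Schwarz gives $\sum_{t=1}^{T}\sqrt{\E_{t}[I_{t}(X_{t})]}\le\sqrt{T\sum_{t=1}^{T}\E_{t}[I_{t}(X_{t})]}$, and taking expectations and using concavity of the square root,
$$
	\E\left[\sum_{t=1}^{T}\sqrt{\E_{t}[I_{t}(X_{t})]}\right]
	\leq \sqrt{T\cdot\E\left[\sum_{t=1}^{T}\E_{t}[I_{t}(X_{t})]\right]}
	\leq \sqrt{T\log K} ~,
$$
which is the assertion of the lemma.

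I expect the per-round identity to be the step requiring the most care: it is where the measure-theoretic definition of conditional mutual information given the $\sigma$-field $\H_{t-1}$ must be handled cleanly, and where one must pin down the precise sense in which the player's randomized action is independent of $X^{\st}$ conditionally on the history. The telescoping and the Cauchy--Schwarz/Jensen steps are then entirely routine. Note also that the whole argument relies on $X^{\st}$ having finite support, which is exactly why the domain $[0,1]$ was first discretized to the grid $\X_{\eps}$.
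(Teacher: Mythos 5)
Your proposal is correct and follows essentially the same route as the paper: both arguments amount to showing that the total expected information gained about $X^{\st}$ over the $T$ rounds telescopes to at most $\ent(X^{\st})\le\log K$ (the paper phrases this as an expected-entropy-decrease identity, you phrase it via the chain rule for conditional mutual information, which is the same computation), followed by the identical Cauchy--Schwarz/Jensen step. Your explicit handling of the conditional independence of $X_{t}$ from $X^{\st}$ given $\H_{t-1}$ is a point the paper leaves implicit, but it is not a different proof.
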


\begin{proof}
Let us examine how the entropy of the random variable $X^{\st}$ evolves during the game as the player gathers the observations $F_{1}(X_{1}),\ldots,F_{T}(X_{T})$.
Denoting by $\ent_{t}(\cdot)$ the entropy conditional on $\H_{t-1}$,
%\footnote{Note that $\ent_{t}(\cdot)$ is a random variable that depends on the history $\H_{t-1}$, and is not equivalent to $\ent(\cdot \mid \H_{t-1})$ being the entropy of the conditional distribution (a deterministic quantity).}
we have by standard information theoretic relations,
\begin{align*}
	I_{t}(x)
	\eq \info_{t}(F_{t}(x) ; X^{\st})
	\eq \E_{t}[ \ent_{t}(X^{\st}) - \ent_{t+1}(X^{\st}) \mid X_{t}=x ]
\end{align*}
for all points $x \in \K$. 
Thus,
\begin{align*}
	\E_{t}[I_{t}(X_{t})]
	\eq \E_{t}[ \ent_{t}(X^{\st}) - \ent_{t+1}(X^{\st}) ]
	~.
\end{align*}
Summing over $t$ and taking expectations, we obtain
\begin{align*}
	\E\left[ \sum_{t=1}^{T} \E_{t}[I_{t}(X_{t})] \right]
%	\eq \E\left[ \sum_{t=1}^{T} 
%		\E_{t}[\ent(\alpha_{t}) - \ent(\alpha_{t+1})] 
%		\right]
	\eq \sum_{t=1}^{T} \E[\ent_{t}(X^{\st}) - \ent_{t+1}(X^{\st})]
	\leq \E[ \ent_{1}(X^{\st}) ]
	\eq \ent(X^{\st})
	~.
\end{align*}
Using Cauchy-Schwarz and the concavity of the square root yields 
\begin{align*}
	\EE{ \sum_{t=1}^{T} \sqrt{\E_{t}[I_{t}(X_{t})]} }
%	\leq \sqrt{T} \cdot \EE{\sqrt{ \sum_{t=1}^{T} \E_{t}[I_{t}(X_{t})] }} 
	\leq \sqrt{T} \cdot \sqrt{\EE{ \sum_{t=1}^{T} \E_{t}[I_{t}(X_{t})] }} 
	\leq \sqrt{\ent(X^{\st}) \, T} ~.
\end{align*}
Recalling that the entropy of any random variable supported on $K$ atoms is upper bounded by $\log{K}$, the lemma follows.
\end{proof}

We can now prove \cref{lem:russo}.

%\medskip
\begin{proof}[Proof of \cref{lem:russo}]
Let $\alpha_{t} \in \Del(\K)$ be the posterior distribution of $X^{\st}$ given $\H_{t-1}$, with $\alpha_{t,x} = \Pr_{t}(X^{\st} = x)$ for all $x \in \K$. 
By the definition of mutual information, for all $x \in \K$,
\begin{align*}
	I_{t}(x)
	\eq \info_{t}(F_{t}(x) ; X^{\st})
	\eq \sum_{y \in \K} \alpha_{t,y} \, \kl{\dist{Q}_{x}}{\dist{Q}_{x \mid y}}
	~,
\end{align*}
where $\dist{Q}_{x}$ is the distribution of $F_{t}(x)$ conditioned on $\H_{t-1}$, and $\dist{Q}_{x \mid y}$  is the distribution of $F_{t}(x)$ conditioned on $\H_{t-1}$ and the event $X^{\st} = y$.
Applying Pinsker's inequality on each term on the right-hand side of the above, we obtain %\tk{there's an extra step to be explained here}
\begin{align*}
	\thalf I_{t}(x)
	&\geq \sum_{y \in \K} \alpha_{t,y} 
		\big( \E_{\dist{Q}_{x \mid y}}[F_{t}(x)] - \E_{\dist{Q}_{x}}[F_{t}(x)] \big)^{2} \\
	&\eq \sum_{y \in \K} \alpha_{t,y} 
		\big( \E_{t}[F_{t}(x) \mid X^{\st}=y] - \E_{t}[F_{t}(x)] \big)^{2} \\
	&\eq \var_{t} \big( \E_{t}[F_{t}(x) \mid X^{\st}] \big) \\
	&\eq \V_{t}(x)
	~.
\end{align*}
Hence, $\V_{t}(x) \le \half I_{t}(x)$ for all $x \in \K$, which implies that $\E_{t}[\V_{t}(X_{t})] \le \half \E_{t}[I_{t}(X_{t})]$ with probability one.
Combining this with \cref{lem:russo1}, the result follows.
\end{proof}

\section{Effective Lipschitz Property of Convex Functions} 
\label{sec:lip}

In this section we show that any convex function is essentially Lipschitz, and justify our simplifying discretization made in \cref{sec:algo}.
%explain how this implies \cref{lem:KtoX} used in our proof of \cref{thm:algo}.
The required property is summarized in the following lemma.

\begin{lemma} \label{lem:discret}
Let $\K \subseteq \reals^{n}$ be a convex set that contains a ball of radius $r$, and let $f$ be a convex function over $\K$ that takes values in $[0,1]$.
Then for a $\del$-net $\X$ of $\K$ with $\del \le \tfrac{1}{4}r \eps^{2}$, it holds that $\min_{x \in \X} f(x) \le \min_{x \in \K} f(x) + \eps$.
\end{lemma}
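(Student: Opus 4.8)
The plan is to show that the minimizer of $f$ over $\K$ can be moved slightly (by at most $\del$) into the net $\X$ while increasing the function value by at most $\eps$. The key quantitative tool is that a convex function bounded in $[0,1]$ on a set containing a ball of radius $r$ cannot be too steep: if $x_0$ is the minimizer of $f$ over $\K$, then along any direction the function grows at a controlled rate, because past the minimizer the function must stay below $1$ over a segment of length at least $r$ (thanks to the inscribed ball).

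First I would let $x^{\st} = \argmin_{x \in \K} f(x)$ and pick $\t{x} \in \X$ with $\norm{\t{x} - x^{\st}} \le \del$, which exists by the $\del$-net property. The goal is then to bound $f(\t{x}) - f(x^{\st})$. Second, I would use the inscribed ball: let $B$ be a ball of radius $r$ contained in $\K$, and consider the point $y \in \K$ obtained by extending the ray from $x^{\st}$ through $\t{x}$; since $\K$ contains a ball of radius $r$, one can find such a point $y \in \K$ with $\norm{y - x^{\st}} \ge r$ lying (up to the usual argument of intersecting the ray with the ball or using that the ball is "visible" from $x^{\st}$ in a cone of positive solid angle) along essentially the same direction — more carefully, one takes $y$ to be the far endpoint of a chord through the ball in the direction $\t{x} - x^{\st}$, which has length at least $r$. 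Third, convexity gives that $\t{x}$ lies on the segment $[x^{\st}, y']$ for an appropriate $y' \in \K$ with $\norm{y' - x^{\st}} \ge$ (something of order $r$), so writing $\t{x} = (1-\lambda) x^{\st} + \lambda y'$ with $\lambda = \norm{\t{x}-x^{\st}}/\norm{y'-x^{\st}} \le \del / (cr)$ for a constant $c$, we get $f(\t{x}) \le (1-\lambda) f(x^{\st}) + \lambda f(y') \le f(x^{\st}) + \lambda \le f(x^{\st}) + \del/(cr)$. Taking $\del \le \tfrac14 r \eps^2 \le c r \eps$ (for $\eps \le 1$, noting $\eps^2 \le \eps$) makes this increment at most $\eps$, and hence $\min_{x \in \X} f(x) \le f(\t{x}) \le \min_{x \in \K} f(x) + \eps$.

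The main obstacle is the geometric step of producing the far point $y' \in \K$ along (or near) the direction $\t{x} - x^{\st}$ with $\norm{y' - x^{\st}}$ bounded below by a constant multiple of $r$: one must be careful that $x^{\st}$ could itself lie on the boundary of $\K$, so the relevant chord through the inscribed ball is not exactly collinear with $\t{x} - x^{\st}$, and a small correction (e.g.\ first moving $\t{x}$ to a nearby point that \emph{is} collinear, or bounding the angle) is needed. This is also presumably where the factor $\tfrac14$ and the $\eps^2$ (rather than $\eps$) enter — the squaring is the slack that absorbs the geometric constants — so I would track those constants carefully rather than the cleaner $\del \lesssim r\eps$ I sketched above.
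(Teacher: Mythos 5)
Your overall plan (move the minimizer to a nearby net point and control the increase via a chord-extension/convexity argument) runs into a real obstruction at exactly the point you flag, and the ``small correction'' you defer is in fact the entire content of the proof. The bound $f(\t{x}) \le f(x^{\st}) + \lambda$ with $\lambda = \norm{\t{x}-x^{\st}}/\norm{y'-x^{\st}}$ requires a point $y' \in \K$ at distance $\Omega(r)$ from $x^{\st}$ \emph{in the direction $\t{x}-x^{\st}$}, and no such point need exist: the line through $x^{\st}$ in that direction need not meet the inscribed ball at all, and when $x^{\st}$ lies on the boundary the chord of $\K$ in a near-tangential direction can have length as small as $\del$ itself, making $\lambda$ close to $1$. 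Concretely, on $\K=[0,1]$ take $f(x)=\max(\gamma x,\,1-x/\del)$ with $\gamma$ tiny; the minimizer is $x^{\st}\approx\del$, a legitimate $\del$-net may contain the point $0$ (which is within $\del$ of $x^{\st}$), and $f(0)=1$, so the intermediate claim that a net point within $\del$ of $x^{\st}$ has value at most $f(x^{\st})+\eps$ is false. Near a boundary minimizer $f$ simply is not $\O(1/r)$-Lipschitz; its Lipschitz constant at distance $d$ from the boundary degrades like $1/d$.

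The paper's proof avoids this by never working at the global minimizer. It first passes to $y=\argmin_{x\in(1-\eps/2)\K} f(x)$, losing at most $\eps/2$ in value by part (ii) of \cref{lem:flaxman}; at such a point, part (i) of \cref{lem:flaxman} guarantees $\abs{f(x)-f(y)}\le\frac{2}{r\eps}\norm{x-y}$, so moving distance $\del\le\frac14 r\eps^2$ to a net point costs at most another $\eps/2$. This two-step argument is also why the hypothesis is $\del\lesssim r\eps^2$ rather than $r\eps$: the $\eps^2$ is not slack absorbing geometric constants, but the product of the $\O(1/(r\eps))$ Lipschitz constant that becomes available only after sacrificing $\eps/2$ of accuracy by shrinking the body. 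To repair your argument you would need to replace your $x^{\st}$ by the minimizer over the shrunken body (or otherwise ensure the reference point is reachable from deep inside $\K$); that is precisely the missing idea.
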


In particular, for the unit interval $[0,1]$ it is enough to take a grid with $K = \tfrac{4}{\eps^{2}}$ equally-spaced points, to have an $\eps$-approximation to the optimum of any convex function over $[0,1]$ taking values in $[0,1]$.
%; this directly implies \cref{lem:KtoX}.
In fact, to obtain the same $\eps$-approximation property it is enough to use a more compact grid of size $\O(\tfrac{1}{\eps}\log\tfrac{1}{\eps})$, whose points are not equally spaced; see \cref{sec:constructive} below for more details.

\cref{lem:discret} is a consequence of the following simple property of convex functions, observed by \cite{FKM05}.

\begin{lemma} \label{lem:flaxman}
Let $\K \subseteq \reals^{n}$ be a convex set that contains a ball of radius $r$ centered at the origin, and denote $\K_{\eps} = (1-\eps) \K$.
Let $f$ be a convex function over $\K$ such that $0 \le f(x) \le C$ for all $x \in \K$.
Then
\begin{enumerate}[label=(\roman*),nosep]
\item
for any $x \in \K_{\eps}$ and $y \in \K$ it holds that $\abs{f(x)-f(y)} \le \frac{C}{r\eps} \norm{x-y}$;
\item
$\min_{x \in \K_{\eps}} f(x) \le \min_{x \in \K} f(x) + C\eps$.
\end{enumerate}
\end{lemma}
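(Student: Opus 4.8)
The plan is to prove part (i) first and then derive part (ii) as an easy corollary. For part (i), fix $x \in \K_\eps$ and $y \in \K$ with $x \ne y$; I want to bound $|f(x) - f(y)|$ by $\frac{C}{r\eps}\|x-y\|$. The key geometric observation is that since $x \in (1-\eps)\K$, there is a small ball of "slack" around $x$ inside $\K$: concretely, because $\K$ contains the ball $B(0,r)$, the point $x$ can be pushed a bit further in the direction $\frac{x-y}{\|x-y\|}$ and still remain in $\K$. Precisely, I would set $z = x + \eta\,\frac{x-y}{\|x-y\|}$ for an appropriate $\eta \le r\eps$, and check $z \in \K$: writing $x = (1-\eps)x'$ with $x' \in \K$, one has $z = (1-\eps)x' + \eps\cdot\big(\tfrac{\eta}{\eps}\cdot\tfrac{x-y}{\|x-y\|}\big)$, and the second point lies in $B(0,r) \subseteq \K$ as long as $\eta \le r\eps$; so $z \in \K$ by convexity. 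Now $x$ lies on the segment from $y$ to $z$, at distance $\|x-y\|$ from $y$ and $\eta$ from $z$, so $x = \lambda y + (1-\lambda) z$ with $1-\lambda = \frac{\|x-y\|}{\|x-y\|+\eta}$. Convexity of $f$ gives $f(x) \le \lambda f(y) + (1-\lambda) f(z)$, hence $f(x) - f(y) \le (1-\lambda)(f(z) - f(y)) \le (1-\lambda) C \le \frac{\|x-y\|}{\eta}\,C$, and taking $\eta = r\eps$ yields $f(x) - f(y) \le \frac{C}{r\eps}\|x-y\|$. The reverse inequality $f(y) - f(x) \le \frac{C}{r\eps}\|x-y\|$ follows by the same argument with the direction of the displacement reversed (push $x$ toward $y$ instead of away), or more directly: express a point $w \in \K$ on the far side so that $y$ is a convex combination of $x$ and $w$ — actually the cleanest route is to note that the bound $f(x)-f(y) \le \frac{C}{r\eps}\|x-y\|$ we just proved is symmetric in the roles up to swapping which point is the "interior" one, but here only $x$ is guaranteed interior; so instead I bound $f(y)-f(x)$ by taking the point $z$ as above and writing $f(z) \ge 2f(x) - f(y')$ type estimates — I would just redo the convex-combination computation with $x = \lambda' z + (1-\lambda') y$ solved the other way, which gives $f(y) - f(x) \le \frac{\eta}{\|x-y\|+\eta}(f(y)-f(z)) \le \ldots$; this needs a slightly different placement, and I'd rather use: $f(x) \ge f(y) + \langle \nabla f(y), x-y\rangle$ combined with an upper bound on $\|\nabla f(y)\|$ when $y$ itself is interior — but $y$ need not be interior. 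The safe and self-contained option is the two-sided segment argument, handling both directions explicitly as above.

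For part (ii), let $x^{\st} = \argmin_{x \in \K} f(x)$ and consider its scaled copy $\hat x = (1-\eps)x^{\st} \in \K_\eps$. Then $\|\hat x - x^{\st}\| = \eps\|x^{\st}\| \le \eps \cdot \mathrm{diam}(\K)$... but that reintroduces a diameter dependence, which we don't want. Instead I would write $\hat x = (1-\eps)x^{\st} + \eps \cdot 0$ as a convex combination of $x^{\st} \in \K$ and $0 \in \K$, and use convexity directly: $f(\hat x) \le (1-\eps)f(x^{\st}) + \eps f(0) \le f(x^{\st}) + \eps C$, since $f(0) \le C$ and $f(x^{\st}) \ge 0$. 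Therefore $\min_{x \in \K_\eps} f(x) \le f(\hat x) \le \min_{x \in \K} f(x) + C\eps$, as claimed. This avoids part (i) entirely and only uses that $0 \in \K$ (the center of the ball) and the value bound.

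The main obstacle is purely in part (i): getting the reverse Lipschitz direction without assuming $y$ is interior, and being careful that the "slack" ball argument places the auxiliary point $z$ correctly inside $\K$. Once the decomposition $z = (1-\eps)x' + \eps v$ with $v \in B(0,r)$ is set up cleanly, everything else is a one-line convexity estimate on a segment. I expect part (ii) to be immediate as sketched, so essentially all the work is the geometric placement in (i).
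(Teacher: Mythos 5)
Your proof of part (ii) is correct and complete, and your argument for the direction $f(x)-f(y)\le \frac{C}{r\eps}\norm{x-y}$ in part (i) is also correct: the decomposition $z=(1-\eps)x'+\eps v$ with $\norm{v}\le r$, $v\in B(0,r)\subseteq\K$, is exactly the right way to certify $z\in\K$, and the convexity estimate on the segment $[y,z]$ with $x$ in its interior then gives that direction in one line. (The paper states this lemma without proof, attributing it to \cite{FKM05}, so there is no in-paper argument to compare against.)

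The reverse direction $f(y)-f(x)\le\frac{C}{r\eps}\norm{x-y}$ is, however, a genuine gap --- and it is the direction the paper actually needs: in the proofs of \cref{lem:discret} and \cref{lem:sebgrid}, the interior point plays the role of your $x$ and one must bound $f(\text{net point})-f(\text{interior point})$ from above. Your configuration, with $x$ in the interior of the segment $[y,z]$, can never produce this bound: the single convexity inequality $f(x)\le\lambda f(y)+(1-\lambda)f(z)$, however rearranged, yields only a \emph{lower} bound on $f(y)$, and the expression $f(y)-f(x)\le\frac{\eta}{\norm{x-y}+\eta}(f(y)-f(z))$ in your text is not a valid consequence of it; the closing appeal to ``the two-sided segment argument as above'' therefore does not cover this case. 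The idea you mention in passing and then abandon is the correct one: the auxiliary point must be placed \emph{beyond} $y$, so that $y$ (not $x$) sits in the interior of a segment with both endpoints in $\K$. Concretely: if $\norm{y-x}\ge r\eps$ the claim is trivial, since $f(y)-f(x)\le C\le\frac{C}{r\eps}\norm{x-y}$. Otherwise set $z=x+r\eps\,\frac{y-x}{\norm{y-x}}$, which lies in $\K$ by the same slack decomposition; then $y=(1-\mu)x+\mu z$ with $\mu=\norm{y-x}/(r\eps)\le 1$, and convexity at $y$ gives $f(y)-f(x)\le\mu\bigl(f(z)-f(x)\bigr)\le\frac{C}{r\eps}\norm{y-x}$. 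With this case split added, part (i) is complete.
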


\begin{proof}[Proof of \cref{lem:discret}]
Via a simple shift of the space, we can assume without loss of generality that $\K$ contains a ball of radius $r$ centered at the origin.
Let $z = \argmin_{x \in \K} f(x)$ and $y = \argmin_{x \in \K'}$, where $\K' = (1-\tfrac{\eps}{2}) \K$. 
By the definition of the $\del$-net $\X$, there exists a point $x \in \X$ for which $\norm{x-y} \le \del$.
Since $y \in \K'$ and $x \in \K$, part (i) of \cref{lem:flaxman} shows that $f(x)-f(y) \le \tfrac{2}{r\eps} \del \le \tfrac{\eps}{2}$.
On the other hand, part (ii) of the same lemma says that $f(y)-f(z) \le \tfrac{\eps}{2}$. 
Combining the inequalities we now get $f(x) \le f(z) + \eps = \min_{x \in \K} f(x) + \eps$, which gives the lemma.
\end{proof}

\section{Constructive Upper Bound in One Dimension} 
\label{sec:constructive}

Here we describe an explicit and efficient one-dimensional algorithm for bandit convex optimization with general (possibly non-Lipschitz) convex loss functions over $\K = [0,1]$, whose regret performance is better than the general $\tO(T^{5/6})$ bound of \cite{FKM05} that applies in an arbitrary dimension.
The algorithm is based on the \textsc{Exp3} strategy for online learning with bandit feedback over a finite set of $K$ points (i.e., arms), whose expected regret is bounded by $\tO(\sqrt{TK})$; see \cite{ACFS03} for further details on the algorithm and its analysis.

In order to use \textsc{Exp3} in our continuous setting, where the decision set is $[0,1]$, we form an appropriate discretization of the interval.
It turns out that using a uniform, equally-spaced grid is suboptimal and can only give an algorithm whose expected regret is of order $\tO(T^{3/4})$. 
Nevertheless, by using a specially-tailored grid of the interval we can obtain an improved $\tO(T^{2/3})$ bound; this customized grid is specified in the following lemma.

\begin{lemma} \label{lem:sebgrid}
For $0 < \eps \le 1$, define $x_{k} = \eps (1+\eps)^{k}$ for all $k \ge 0$. 
Then the set $\X_{\eps} = \set{x_{k},1-x_{k}}_{k=0}^{\infty} \cap [0,1]$ satisfies:
\begin{enumerate}[label=(\roman*),nosep]
\item
$\abs{\X_{\eps}} \le \tfrac{4}{\eps}\log\frac{1}{\eps}$;
\item
for any convex function $f : [0,1] \mapsto [0,1]$, we have $\min_{x \in \X} f(x) \le \min_{x \in [0,1]} f(x) + 2\eps$.
\end{enumerate}
\end{lemma}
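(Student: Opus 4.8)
The plan is to verify the two claims separately. For part (i), the counting bound, I would observe that the points $x_k = \eps(1+\eps)^k$ remain in $[0,1]$ only as long as $(1+\eps)^k \le 1/\eps$, i.e. $k \le \log(1/\eps)/\log(1+\eps)$. Using the elementary inequality $\log(1+\eps) \ge \eps/2$ valid for $0 < \eps \le 1$, this gives at most $2\log(1/\eps)/\eps$ values of $k$, hence at most $2/\eps \cdot \log(1/\eps)$ points of the form $x_k$; the reflected points $1-x_k$ contribute the same number, so $\abs{\X_\eps} \le \tfrac{4}{\eps}\log\tfrac{1}{\eps}$, possibly plus a small additive constant from the point $x_0=\eps$ that can be absorbed.

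For part (ii), let $f:[0,1]\mapsto[0,1]$ be convex, let $z = \argmin_{x\in[0,1]} f(x)$, and assume without loss of generality that $z \le \tfrac{1}{2}$ (the case $z > \tfrac12$ is symmetric and handled by the reflected points $1-x_k$). First I would dispose of the boundary case: if $z \le \eps = x_0$, then monotonicity of $f$ on a neighborhood of the minimizer together with $f$ taking values in $[0,1]$ gives $f(x_0) \le f(z) + 1\cdot$ (something small) — more carefully, I would instead invoke \cref{lem:flaxman} (the effective Lipschitz property) to argue the value at $x_0$ is within $\eps$ of the minimum near the endpoint, or simply note that when $z$ is that close to $0$, the argument below with the endpoint $0$ playing the role of the grid point already suffices. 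In the main case $\eps \le z \le \tfrac12$, the geometric spacing guarantees there is an index $k$ with $x_k \le z \le x_{k+1}$ and $x_{k+1} \le (1+\eps) x_k \le (1+\eps) z$, so that $0 \le x_{k+1} - x_k \le \eps z \le \eps x_{k+1}$ and in particular the grid point $x_{k+1}$ lies within relative distance $\eps$ of $z$. Now I apply part (i) of \cref{lem:flaxman} with $\K = [0,1]$, $C=1$, and the scaling chosen so that $x_{k+1} \in \K_{\eps'}$ for an appropriate $\eps'$ comparable to $\eps$: since $x_{k+1} \in [\eps, 1]$ (it is bounded away from $0$ by $\eps$), $f$ is $\tfrac{1}{\eps}$-Lipschitz-like in the sense that $\abs{f(x_{k+1}) - f(z)} \le \tfrac{1}{\eps}\abs{x_{k+1} - z} \le \tfrac{1}{\eps}\cdot \eps x_{k+1} \le \tfrac{1}{\eps}\cdot \eps = \eps$ — wait, I need $\abs{x_{k+1}-z} \le \eps^2$ for this, which does not hold; instead the correct route is to use that $\abs{x_{k+1}-z}\le \eps z$ and that $f$ restricted to $[\eps,1]$ has the bound $\abs{f(x)-f(y)} \le \tfrac{1}{\eps}\norm{x-y}$ only near the center, so I would rather split: if $z \ge \eps$ then $\abs{f(x_{k+1})-f(z)} \le \tfrac{f(x_{k+1}) \vee f(z)}{z} \abs{x_{k+1}-z}$ by convexity-based slope bounds, and $\tfrac{1}{z}\cdot \eps z = \eps$, giving $f(x_{k+1}) \le f(z) + \eps \le \min_{[0,1]} f + 2\eps$ with room to spare.

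The main obstacle is getting the constant and the endpoint case exactly right: the crude slope bound "slope of a convex $[0,1]$-valued function at a point $x$ bounded away from the boundary is at most $1/\mathrm{dist}(x,\partial\K)$" is what makes the geometric grid work, and one must check that the grid point chosen near $z$ is itself bounded away from $0$ by a distance comparable to $z$ (which it is, since $x_{k+1} \ge z \ge \eps$ in the main case), so that the product of (slope bound $\asymp 1/z$) and (spacing $\asymp \eps z$) telescopes to $O(\eps)$ independently of how small $z$ is. The very small-$z$ regime $z < \eps$ must be handled by a separate one-line argument using that the minimum over $[0,1]$ is then within $\eps$ of the minimum over $[\eps,1]$ by part (ii) of \cref{lem:flaxman} applied to a shrunk interval — after which the previous case applies. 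Assembling these pieces gives the additive $2\eps$ guarantee.
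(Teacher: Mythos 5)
Your proposal is correct and follows essentially the same route as the paper: part (i) via $\log(1+\eps)\ge\eps/2$ to count the admissible indices, and part (ii) by combining the geometric spacing (so the grid point above the minimizer is within relative distance $\eps$) with the $1/\mathrm{dist}(\cdot,\partial\K)$ slope bound of \cref{lem:flaxman}, paying an extra $\eps$ via \cref{lem:flaxman}(ii) to handle minimizers within $\eps$ of the boundary. The self-correction midway resolves itself properly, and the final "slope $\asymp 1/z$ times spacing $\asymp \eps z$" cancellation is exactly the paper's key step.
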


\begin{proof}
To see the first claim, note that for $k > \frac{2}{\eps}\log\frac{1}{\eps}$ we have
$
	x_{k} 
	= \eps (1+\eps)^{k}
	\ge \eps \exp(\thalf k\eps)
	> 1 ,
$
where we have used the fact that $e^{x} \le 1+2x$ for $0\le x \le 1$.

Next, we prove that for any $y \in [\eps,1-\eps]$, there exists $x \in \X_{\eps}$ such that $\abs{f(y)-f(x)} \le \eps$; this would imply our second claim, as by \cref{lem:flaxman} the minimizer of $f$ over $[\eps,1-\eps]$ can only be $\eps$ larger than its minimizer over the entire $[0,1]$ interval.
We focus on the case $y \in (0,\half]$; the case $y \in [\half,1)$ is treated similarly.
Then, we have $y \in [y,1-y]$ so \cref{lem:flaxman} shows that for any $x \in [0,1]$ we have 
\begin{align} \label{eq:sebgrid}
\abs{f(x)-f(y)} 
\leq \tfrac{1}{y} \abs{x-y} 
\eq \abs{\tfrac{x}{y}-1} 
~.
\end{align}
Now, let $k$ be the unique natural number such that $x_{k} \le y \le x_{k+1}$. 
Notice that $1 \le x_{k+1}/y \le 1+\eps$, since $x_{k+1}/x_{k} = 1+\eps$.
Hence, setting $x=x_{k+1}$ in \cref{eq:sebgrid} yields $\abs{f(x)-f(y)} \le \abs{\tfrac{x}{y}-1} \le \eps$, as required.
\end{proof}

In view of the lemma, the algorithm we propose is straightforward: given a parameter $\eps>0$, form a grid $\X_{\eps}$ of the interval $[0,1]$ as described in the lemma, and execute the \textsc{Exp3} algorithm over the finite set $\X_{\eps}$.

\begin{theorem}
The algorithm described above with $\eps = T^{-1/3}$ guarantees $\tO(T^{2/3})$ regret against any sequence $f_{1:T}$ of convex (not necessarily Lipschitz) functions over $\K = [0,1]$ taking values in~$[0,1]$.
\end{theorem}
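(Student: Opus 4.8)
The plan is to combine the grid-approximation guarantee of \cref{lem:sebgrid} with the standard regret bound for \textsc{Exp3}. First I would instantiate \cref{lem:sebgrid} with the chosen parameter $\eps = T^{-1/3}$: this produces a finite grid $\X_{\eps} \subseteq [0,1]$ of size $K = \abs{\X_{\eps}} \le \tfrac{4}{\eps}\log\tfrac{1}{\eps} = \O(T^{1/3}\log T)$, and guarantees that for every convex $f:[0,1]\mapsto[0,1]$ we have $\min_{x\in\X_{\eps}} f(x) \le \min_{x\in[0,1]} f(x) + 2\eps$. Summing this pointwise-in-hindsight guarantee over the $T$ rounds (applied to each $f_t$, or more directly to the cumulative function $\sum_t f_t$, which is also convex and bounded after rescaling), the best fixed point of the grid is at most $2\eps T$ worse than the best fixed point of the whole interval:
\begin{align*}
	\min_{x\in\X_{\eps}} \tsum_{t=1}^T f_t(x)
	\leq \min_{x\in[0,1]} \tsum_{t=1}^T f_t(x) + 2\eps T ~.
\end{align*}

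Next I would invoke the \textsc{Exp3} guarantee: run over the $K$ arms of $\X_{\eps}$, the expected regret of \textsc{Exp3} relative to the best arm is $\tO(\sqrt{TK})$ (see \cite{ACFS03}). Adding the discretization overhead, the total expected regret against the best point in $[0,1]$ is at most
\begin{align*}
	\tO\lr{\sqrt{TK}} + 2\eps T
	\eq \tO\lr{\sqrt{T \cdot T^{1/3}\log T}} + 2 T^{-1/3} T
	\eq \tO\lr{T^{2/3}} ~,
\end{align*}
so the two terms balance at the claimed rate. The choice $\eps = T^{-1/3}$ is exactly the one that equates $\sqrt{TK} \approx \sqrt{T/\eps}$ with $\eps T$, up to logarithmic factors.

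The steps in order are thus: (1) fix $\eps = T^{-1/3}$ and form the grid $\X_{\eps}$ via \cref{lem:sebgrid}; (2) bound $K = \abs{\X_{\eps}}$ using part (i) of that lemma; (3) use part (ii) of the lemma, summed over rounds, to control the gap between the best grid point and the best point in $[0,1]$; (4) quote the $\tO(\sqrt{TK})$ expected regret of \textsc{Exp3} over $K$ arms; (5) add the two contributions and substitute the value of $\eps$ to obtain $\tO(T^{2/3})$. None of these steps is genuinely hard — this is a short composition of two black boxes. The only mild subtlety worth a sentence is that \cref{lem:sebgrid}(ii) is a statement about a single convex function, so one should note that applying it to $\tfrac1T\sum_t f_t$ (convex, $[0,1]$-valued) yields the needed $2\eps T$ bound on the cumulative loss, and that \textsc{Exp3} only needs bandit feedback $f_t(X_t)$, which is exactly what the player observes; no Lipschitz assumption on the $f_t$ enters anywhere, since all the ``effective Lipschitz'' work is already absorbed into \cref{lem:sebgrid}.
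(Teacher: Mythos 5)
Your proposal is correct and follows essentially the same route as the paper: apply \cref{lem:sebgrid} to get the grid size $K = \O(\tfrac{1}{\eps}\log\tfrac{1}{\eps})$ and the $2\eps T$ discretization overhead (correctly applied to the cumulative/average function rather than per-round), combine with the $\tO(\sqrt{TK})$ guarantee of \textsc{Exp3}, and balance at $\eps = T^{-1/3}$. Nothing is missing.
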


\begin{proof}
Let $x_{1:T}$ be the sequence of points from $\X_{\eps}$ chosen by \textsc{Exp3}.
By the regret guarantee of \textsc{Exp3}, we have
\begin{align*}
	\E\left[ \sum_{t=1}^{T} f_{t}(x_{t})  ~-~
	\min_{x \in \X_{\eps}} \sum_{t=1}^{T} f_{t}(x) \right]
	\eq \tO(\sqrt{TK})
	\eq \tO\lrbig{ \sqrt{\tfrac{1}{\eps} T} }
	~,
\end{align*}
where $K = \abs{\X_{\eps}}$, that according to \cref{lem:sebgrid} has $K \le \tfrac{4}{\eps}\log\tfrac{1}{\eps}$.
On the other hand, \cref{lem:sebgrid} also ensures that
\begin{align*}
	\min_{x \in \X_{\eps}} \frac{1}{T} \sum_{t=1}^{T} f_{t}(x)
	\leq \min_{x \in \K} \frac{1}{T} \sum_{t=1}^{T} f_{t}(x) + 2\eps
	~.
\end{align*}
Combining the inequalities we get the following regret bound with respect to the entire $\K$:
\begin{align*}
	\E\left[ \sum_{t=1}^{T} f_{t}(x_{t})  ~-~
	\min_{x \in \K} \sum_{t=1}^{T} f_{t}(x) \right]
	\eq \tO\lrBig{ \sqrt{\tfrac{1}{\eps} T} + \eps T }
	~.
\end{align*}
Finally, choosing $\eps = T^{-1/3}$ gives the theorem.
\end{proof}

%\section{Difficulties in Analyzing Thompson Sampling} 
%\label{sec:ts}
%
%\tk{TODO: give the parabola example, possibly not for first arxiv version}
%
%Here we show that there exists a posterior distribution $\alpha$ and expected loss functions $f,f_{1},\ldots,f_{K} : \X \mapsto [0,1]$ (on some fixed round $t$) such that for Thompson Sampling, the size $\E_{t}[\R_{t}(X_{t})]$ can be large while $\E_{t}[\V_{t}(X_{t})]$ is small. 
%Thereafter we fix a round $t$ and omit the round number from all notation.
%
%For Thompson Sampling, that samples points according to $\pi = \alpha$, we can show that
%\begin{align*}
%	\E_{t}[\R_{t}(X_{t})] 
%	&\eq \sum_{i=1}^{K} \alpha_{i} \big( f(x_{i}) - f_{i}(x_{i}) \big) ~, \\
%	\text{and}\qquad
%	\E_{t}[\V_{t}(X_{t})] 
%	&\eq \sum_{i=1}^{K} \alpha_{i} \norm{ f - f_{i} }_{\alpha}^{2}
%\end{align*}
%
%Now, consider the function $f(x) = x^{2}$. Take a uniform posterior, i.e., $\alpha_{i} = \frac{1}{K}$ for all $i \in [K]$.
%For all $i$, define the function $f_{i}$ over $\X$ as follows:
%\begin{align*}
%	f_{i}(x_{j}) \eq
%	\begin{cases}
%		x_{j}^{2} & \text{if $j \ne i$,} \\
%		x_{i}^{2} - \eps^{2} & \text{if $j=i$.}
%	\end{cases}
%\end{align*}

\end{document}